\documentclass{article}

\usepackage{microtype}
\usepackage{graphicx}
\usepackage{subcaption}
\usepackage{booktabs} % for professional tables

\usepackage{hyperref}

\usepackage[accepted]{icml2026}

\usepackage{amsmath}
\usepackage{amssymb}
\usepackage{mathtools}
\usepackage{amsthm}
\usepackage{algorithm}
\usepackage{algorithmic}

\usepackage{array}
\usepackage[T1]{fontenc}
\usepackage[utf8]{inputenc}
\usepackage{multirow} 
\newcolumntype{C}{>{\centering\arraybackslash}m{1cm}}
\newcommand{\score}[3]{%
  \ensuremath{#1}%
  \ifx\relax#3\relax\else #3\fi%
  \std{#2}%
}
\newcommand{\na}{\textemdash}
\usepackage[capitalize,noabbrev]{cleveref}

\theoremstyle{plain}
\newtheorem{theorem}{Theorem}[section]

\newtheorem{lemma}[theorem]{Lemma}

\theoremstyle{definition}

\theoremstyle{remark}

\usepackage[textsize=tiny]{todonotes}

\icmltitlerunning{Controlled Dynamics Attractor Transformer}

\begin{document}

\twocolumn[
  \icmltitle{Controlled Dynamics Attractor Transformer}

  % It is OKAY to include author information, even for blind submissions: the
  % style file will automatically remove it for you unless you've provided
  % the [accepted] option to the icml2026 package.

  % List of affiliations: The first argument should be a (short) identifier you
  % will use later to specify author affiliations Academic affiliations
  % should list Department, University, City, Region, Country Industry
  % affiliations should list Company, City, Region, Country

  % You can specify symbols, otherwise they are numbered in order. Ideally, you
  % should not use this facility. Affiliations will be numbered in order of
  % appearance and this is the preferred way.
  \icmlsetsymbol{equal}{*}

  \begin{icmlauthorlist}
    % \icmlauthor{Firstname1 Lastname1}{equal,yyy}
    % \icmlauthor{Firstname2 Lastname2}{equal,yyy,comp}
    % \icmlauthor{Firstname3 Lastname3}{comp}
    % \icmlauthor{Firstname4 Lastname4}{sch}
    % \icmlauthor{Firstname5 Lastname5}{yyy}
    % \icmlauthor{Firstname6 Lastname6}{sch,yyy,comp}
    % \icmlauthor{Firstname7 Lastname7}{comp}
    % %\icmlauthor{}{sch}
    % \icmlauthor{Firstname8 Lastname8}{sch}
    % \icmlauthor{Firstname8 Lastname8}{yyy,comp}
    \icmlauthor{Cheng Zhang}{xjtu}
    \icmlauthor{Minnan Luo}{xjtu}
    \icmlauthor{Zesheng Yang}{xjtu}
    \icmlauthor{Ming Li}{thu}
    \icmlauthor{Yong-Jin Liu}{thu}
    \icmlauthor{Qinghua Zheng}{xjtu}
    %\icmlauthor{}{sch}
    %\icmlauthor{}{sch}
  \end{icmlauthorlist}

  % \icmlaffiliation{yyy}{Department of XXX, University of YYY, Location, Country}
  % \icmlaffiliation{comp}{Company Name, Location, Country}
  % \icmlaffiliation{sch}{School of ZZZ, Institute of WWW, Location, Country}
 \icmlaffiliation{xjtu}{School of Computer Science and Technology, MOEKLINNS Lab, Xi’an Jiaotong University,  Xi'an, Shaanxi, China}
  \icmlaffiliation{thu}{Department of Computer Science and Technology, Tsinghua University, Beijing, China}

\icmlcorrespondingauthor{Minnan Luo}{minnluo@xjtu.edu.cn}

  % \icmlcorrespondingauthor{Firstname1 Lastname1}{first1.last1@xxx.edu}
  % \icmlcorrespondingauthor{Firstname2 Lastname2}{first2.last2@www.uk}

  % You may provide any keywords that you find helpful for describing your
  % paper; these are used to populate the "keywords" metadata in the PDF but
  % will not be shown in the document
  \icmlkeywords{Transformers,Energy Based Model,Hopfield Networks,Attractor Neural Network,Graph Anomaly Detection}
 \vskip 0.3in
]

% this must go after the closing bracket ] following \twocolumn[ ...

% This command actually creates the footnote in the first column listing the
% affiliations and the copyright notice. The command takes one argument, which
% is text to display at the start of the footnote. The \icmlEqualContribution
% command is standard text for equal contribution. Remove it (just {}) if you
% do not need this facility.

% Use ONE of the following lines. DO NOT remove the command.
% If you have no special notice, KEEP empty braces:
\printAffiliationsAndNotice{}  % no special notice (required even if empty)
% Or, if applicable, use the standard equal contribution text:
% \printAffiliationsAndNotice{\icmlEqualContribution}

\begin{abstract}
Transformer architectures have dramatically advanced representation learning and inference in deep models through self-attention mechanisms. In parallel, associative memory frameworks map representations onto energy landscapes, offering interpretable retrieval mechanisms. However, their continuous-time inference dynamics lack the biological plausibility of classical Continuous Attractor Neural Networks. To bridge this gap, we propose \textbf{Controlled Dynamics Attractor Transformer (CDAT)}, which couples a mixture von Mises–Fisher attention energy with a Hopfield refinement energy, while augmenting energy descent with a CANN-inspired excitation–inhibition modulation. CDAT instantiates a topology-constrained dynamical system whose couplings encode relational structure among tokens, thereby linking attractor-style dynamics to modern energy-based attention. We further provide a constructive dissipation analysis to formally establish their controlled inference dynamics. Benefiting from these robust and structured dynamics, CDAT achieves state-of-the-art performance across multiple benchmarks in graph anomaly detection and graph classification.
\end{abstract}

\section{Introduction}
\label{submission}
Transformer-style attention has emerged as the cornerstone of information routing in modern deep networks. However, in realistic scenarios characterized by noisy observations, adversarial perturbations, and complex relational dependencies, the update dynamics of token representations often prove fragile. Small deviations can propagate and amplify through repeated interactions, causing trajectories to drift toward spurious states, and global mixing can wash out locally meaningful structure~\cite{pmlr-v139-davis21a}. This motivates a question that is largely orthogonal to block-level heuristics: can we explicitly endow attention-driven representation updates with a notion of stability and attraction, ensuring that iterative inference converges reliably to meaningful states rather than merely producing a feed-forward output?

A natural solution emerges from attractor dynamics in cognitive science and neural computation. Associative (semantic) memory explains how biological systems robustly recover latent content from partial cues~\cite{NIPS2016_eaae339c,krotov2021large}—a behavior computationally modeled as content-addressable memory. In this process, complete patterns are reconstructed from degraded queries by evolving neural activity toward stable states~\citep{hinton2014parallel, rolls2013mechanisms, tsodyks1995associative}. This perspective suggests modeling representation learning as an iterative state evolution that relaxes toward stable attractors shaped by a global objective.

Modern advancements in Associative Memory (AM) model make this connection concrete. Hopfield networks and their dense/continuous variants provide recurrent dynamics governed by an explicit energy function, and recent results reveal softmax attention as a special case of these models~\citep{ramsauer2021hopfield, NEURIPS2021_8171ac2c}. From this perspective, stability is intrinsic rather than incidental: the existence of a global energy landscape imposes rigorous constraints on permissible operations and parameter symmetries, theoretically guaranteeing the convergence of the forward computation~\citep{saha2023end}.

In addition, Energy Transformer (ET) operationalizes this principle by starting from a task-tailored energy function and deriving an iterative Transformer-like block as a consequence of minimizing that energy ~\citep{hoover2023energy}. Instead of stacking many conventional blocks, ET iterates token representations within a single energy-based block until (approximate) convergence, turning inference into an interpretable dynamical system that can be inspected through update directions over time.

%A key motivation for our design arises from the limitation of \textbf{discrete attractor} models: classic Hopfield-type networks excel at robust retrieval over finite pattern sets ~\citep{amit1985storing,amit1989model}, yet their basins typically collapse memories into isolated point attractors, which can be restrictive when latent factors vary continuously. Continuous Attractor Neural Networks (CANNs) offer a complementary perspective: under approximately symmetric interactions, they can sustain stable activity profiles that drift smoothly along low-dimensional manifolds ~\citep{amari1977dynamics, samsonovich1997path,NIPS2014_57c76ace}. In structured settings, however, relational interactions (\emph{e.g.}, graph topology) are usually heterogeneous~\citep{hamilton2017representation}, so the manifold view is only approximate. Practically, the dynamics still converge to discrete stable states, by absorbing relational information, producing basins that better align with the data geometry and thereby improving robustness and retrieval quality.

A key motivation for our design arises from the limitations of discrete attractor models. Classic Hopfield-type networks excel at robust retrieval over finite pattern sets \citep{amit1985storing,amit1989model}, yet their basins typically collapse memories into isolated point attractors, which becomes restrictive when latent factors vary continuously. Continuous Attractor Neural Networks (CANNs) offer a complementary perspective: under approximately symmetric interactions, they sustain stable activity profiles that drift smoothly along low-dimensional manifolds \citep{amari1977dynamics, samsonovich1997path, NIPS2014_57c76ace}. In structured settings, however, relational interactions (\emph{e.g.}, graph topology) are typically heterogeneous \citep{hamilton2017representation}, rendering the manifold view only approximate. Practically, the dynamics still converge to discrete stable states by absorbing relational information, producing basins that better align with the data geometry and thus improving robustness and retrieval quality.

Accordingly, we inject CANN-inspired mechanisms into modern Hopfield-type associative memory via a continuous-attractor–inspired modulation that implements local excitation and global inhibition under  structurally modulated feedback. The goal is not to enforce a literal continuous attractor manifold, but to reshape the energy landscape and its induced trajectories so that the resulting discrete attractors become topology-aware and less prone to spurious convergence. More broadly, this modulation introduces a new control interface over attractor dynamics: instead of treating inference as a purely monotone descent along the energy negative gradient~\citep{hoover2023energy}, we expose additional, principled degrees of freedom for steering, damping, and stabilizing the state evolution. This challenges the conventional view that associative memory dynamics need only be driven by energy descent~\citep{hopfield1982neural, ramsauer2021hopfield}, and positions our framework as a more expressive and controllable platform for iterative inference.

In this paper, we propose the Controlled Dynamics Attractor Transformer (CDAT), an energy-based Transformer framework with a clearer division of computational roles and a more controllable attractor geometry. CDAT integrates two complementary energy components—for coarse directional alignment and fine-grained prototype retrieval—with a CANN-inspired modulation mechanism~\citep{hamilton2017representation, wu2020comprehensive}. This design explicitly stabilizes the attractor geometry against spurious states while preserving structural fidelity. As illustrated in Fig.~\ref{fig:abstract}, this mechanism enables CDAT to escape spurious local minima that entrap standard energy descent schemes. 
Specifically, our main contributions are threefold:
\begin{itemize}
    \item  We propose CDAT, an energy-based Transformer framework that develops a structure-aware modulation mechanism inspired by CANNs. By coupling this with explicit stabilization, we achieve controllable attractor dynamics beyond naive energy descent. The resulting topology-induced attractors suppress spurious states and oscillations, improving robustness on standard graph benchmarks. Furthermore, we provide a dissipation analysis proving that the system dynamics monotonically minimize the energy functional and converge to a stable invariant set. 
    % Furthermore, we provide a \textbf{dissipation analysis} that the proposed dynamics decrease along the monotone functional and characterize the associated invariant set.
    \item  By operationalizing the equivalence between self-attention and the mixture of von Mises–Fisher (Mo–vMF) distribution, we formulate a Mo–vMF energy to drive rapid, coarse semantic alignment in the directional feature space. This is coupled with a Hopfield refinement energy for late-stage prototype retrieval and sharpening, establishing a clear division of computational roles along the relaxation trajectory.
    % By operationalizing the probabilistic equivalence between self-attention and the \textbf{mixture von Mises–Fisher (Mo–vMF)} probabilistic model, we formulate a Mo–vMF energy to drive rapid, coarse semantic alignment in directional feature space. This is coupled with a \textbf{Hopfield refinement energy} for late-stage prototype retrieval and sharpening, establishing a clear division of computational roles along the relaxation trajectory.
    \item We empirically demonstrate that CDAT achieves state-of-the-art performance across multiple benchmarks in graph anomaly detection and classification. It consistently outperforms strong energy-based baselines, validating that our controlled dynamics significantly enhance structural discriminability and detection stability against noise.

\end{itemize}

\begin{figure}[!t]
  \begin{center}
    \centerline{\includegraphics[width=\columnwidth]{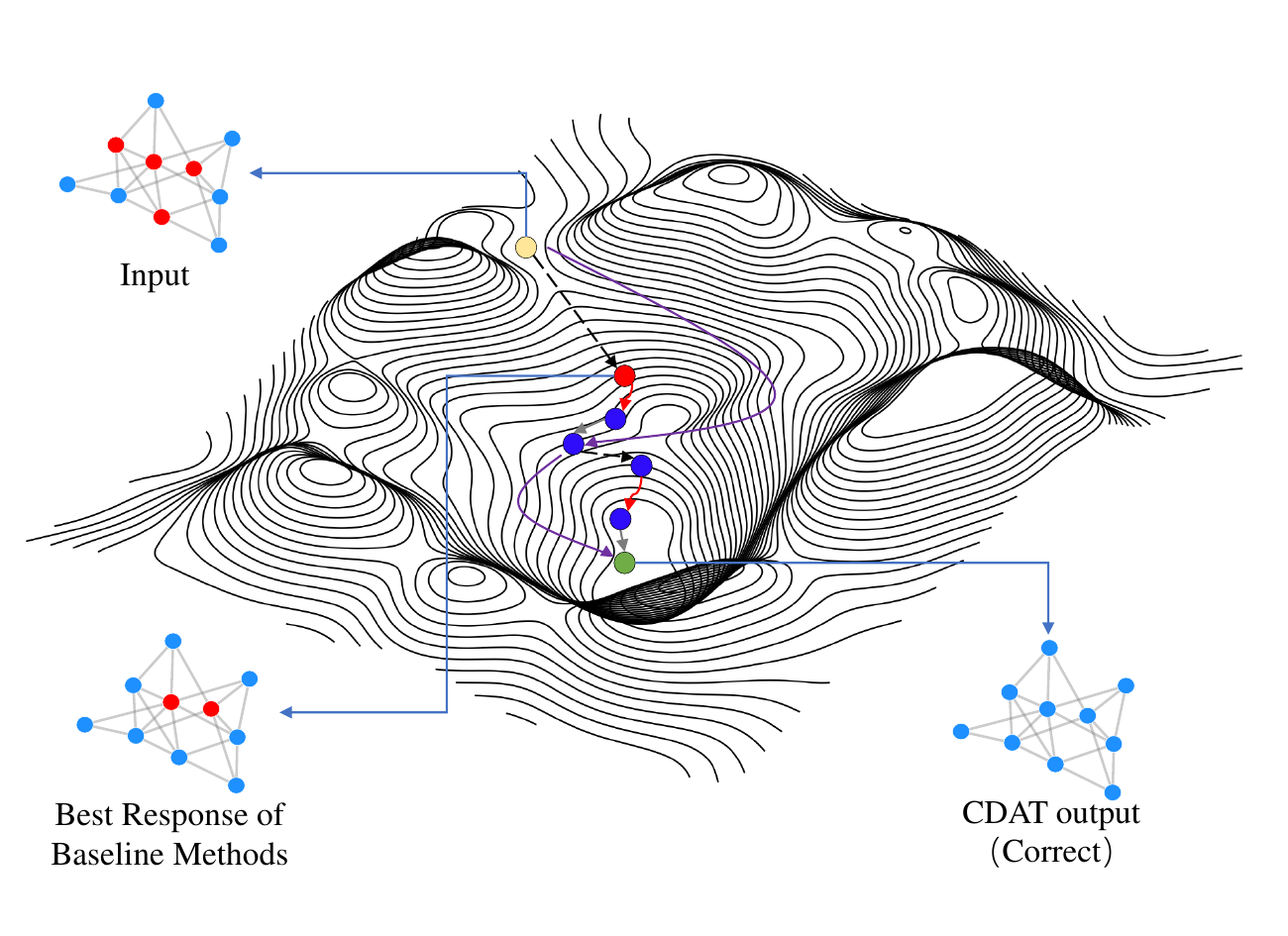}}
    \caption{CDAT dynamics.  The figure shows the representation trajectories on the energy landscape. While the negative-gradient baseline (black dashed) gets trapped in a spurious local minimum, CDAT (purple) reaches the global minimum via self-inhibition (red) that repels the state from shallow traps and topology-aware aggregation (gray) that steers it with structural priors.
    }
    \label{fig:abstract}
  \end{center}
\end{figure}

\section{Related Work}
\paragraph{Associative memory model.}
Dense associative memory networks can be viewed as a powerful generalization of the classical Hopfield network. They embed Hopfield-type dynamics into deep networks via energy-based associative memory mechanisms, thereby defining a continuous artificial recurrent neural network whose state is characterized by a vector evolving over time according to a nonlinear update rule~\citep{millidge2022universal,NEURIPS2024_29ff36c8,NEURIPS2023_57bc0a85}.
Within this framework, each stored pattern exerts a ``force'' on the state (or particle) in a way that follows a specific statistical model.
Beyond the classical capacity limitations of discrete Hopfield memories~\citep{hopfield1982neural}, dense variants substantially increase memory capacity and strengthen pattern completion by using rapidly growing nonlinearities / higher-order interactions~\citep{NIPS2016_eaae339c,demircigil2017model,lucibello2024exponential}, and theoretical analyses further characterize the large-capacity regime~\citep{krotov2021large,hu2024outlier}.
These developments motivate viewing representation learning through an attractor-memory lens; CDAT builds on this perspective but targets controllable and structure-aware attractor geometry for iterative inference.

\paragraph{Energy-based Transformer and iterative inference.}
Energy-based self-attention recasts the Transformer block as a differentiable dynamical system minimizing a global energy function~\citep{ramsauer2021hopfield}.
Rather than performing a single-step weighted sum, tokens evolve self-consistently in an energy landscape, reflecting internal information flow and the relaxation of the system towards equilibrium.
This view connects attention to modern Hopfield retrieval~\citep{ramsauer2021hopfield,NEURIPS2021_8171ac2c,Sun_2025_CVPR,wu2025incontext} and motivates architectures that perform inference by iterating within a block until (approximate) convergence~\citep{du2022learning,hoover2023energy}.
Relatedly, associative memory dynamics have been used as an end-to-end differentiable solver for prototype-style objectives such as clustering ~\citep{saha2023end}.
CDAT follows this iterative-inference paradigm but introduces a two-energy decomposition (Mo–vMF alignment vs.\ Hopfield refinement) and a topology-conditioned modulation to improve convergence under heterogeneous relational interactions.

\paragraph{Continuous attractors and excitation--inhibition mechanisms.}
Neural field models with local excitation and global inhibition can sustain continuous attractors~\citep{amari1977dynamics,wu2016continuous,chandra2025episodic}.
Additional mechanisms such as spike-frequency adaptation can enhance tracking behavior and modulate attractor dynamics ~\citep{NIPS2014_57c76ace}.
CDAT leverages excitation--inhibition as a structure-dependent modulation to reshape attractor basins and suppress spurious convergence, without requiring an exact continuous-attractor manifold.

\section{CDAT Dynamics of Token Updates}

In this section, we formalize CDAT as a continuous-time dynamical system. By starting with an energy-gradient drive, we augment the trajectory with CANN-inspired excitation–inhibition dynamics to enable topology-aware modulation. We establish a dissipation certificate for the proposed dynamics via a trajectory-wise storage functional, and discretize the resulting Ordinary Differential Equation (ODE) with a first-order Euler step to obtain the CDAT update rule. The specific semantic energy used is defined in Sec.~4 and the overall architecture of CDAT is shown in Fig.~\ref{fig:overview}.

\begin{figure}[t]
  %\vskip 0.2in
  \begin{center}
    \centerline{\includegraphics[width=\linewidth]{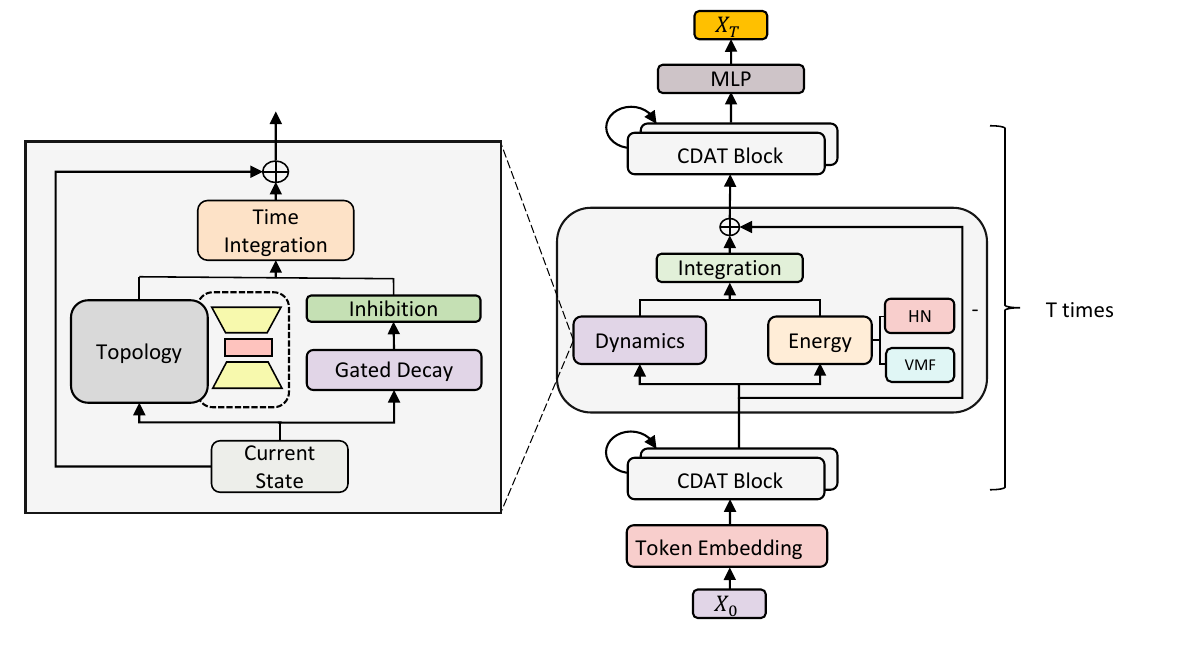}}
    \caption{Overall architecture of CDAT.}
  \label{fig:overview}
  \end{center}
\end{figure}

\subsection{Energy-guided Gradient Flow}
AM models achieve content-addressable retrieval by pairing an explicit energy function with attractor dynamics ~\citep{krotov2021large,millidge2022universal}: minima of the energy correspond to stable memories, and inference proceeds by driving the state toward attractors with low energy. In the same spirit of energy-based inference, we view token refinement as continuous-time \emph{state evolution} on an energy landscape. In a $D$-dimensional Euclidean space, consider a state $\mathbf{x}=(x_1,\cdots,x_D) \in \mathbb{R}^D$ and $\mathbf{g}=(g_1,\cdots,g_D) \in \mathbb{R}^D$ as its layer-normalized output. Particularly let $E: \mathbb{R}^{N \times D} \to \mathbb{R}$ be a differentiable energy function defined over this normalized state matrix $G=[\mathbf{g}_1,\cdots,\mathbf{g}_N]^\top \in \mathbb{R}^{N \times D}$. The attractor dynamics controls the trajectory of $\mathbf{x}$ in latent space by specifying $d\mathbf{x}/dt$; Under monotone energy decrease, that is $dE/dt < 0$, it guarantees stable convergence to a local minimum of the energy landscape.
We write this as the gradient-flow ODE:
\begin{equation}
\label{eq:grad_flow}
\tau \frac{d \mathbf{x}}{dt}
= -\nabla_{\mathbf{g}} E,
\end{equation}
where $\tau>0$ sets the time scale and lower energy corresponds to more compatible configurations.
This establishes a principled baseline dynamics on which we introduce the controlled excitation--inhibition modulation in the next subsection.

\begin{figure*}[!t]
  \begin{center}
    \centerline{\includegraphics[width=\linewidth]{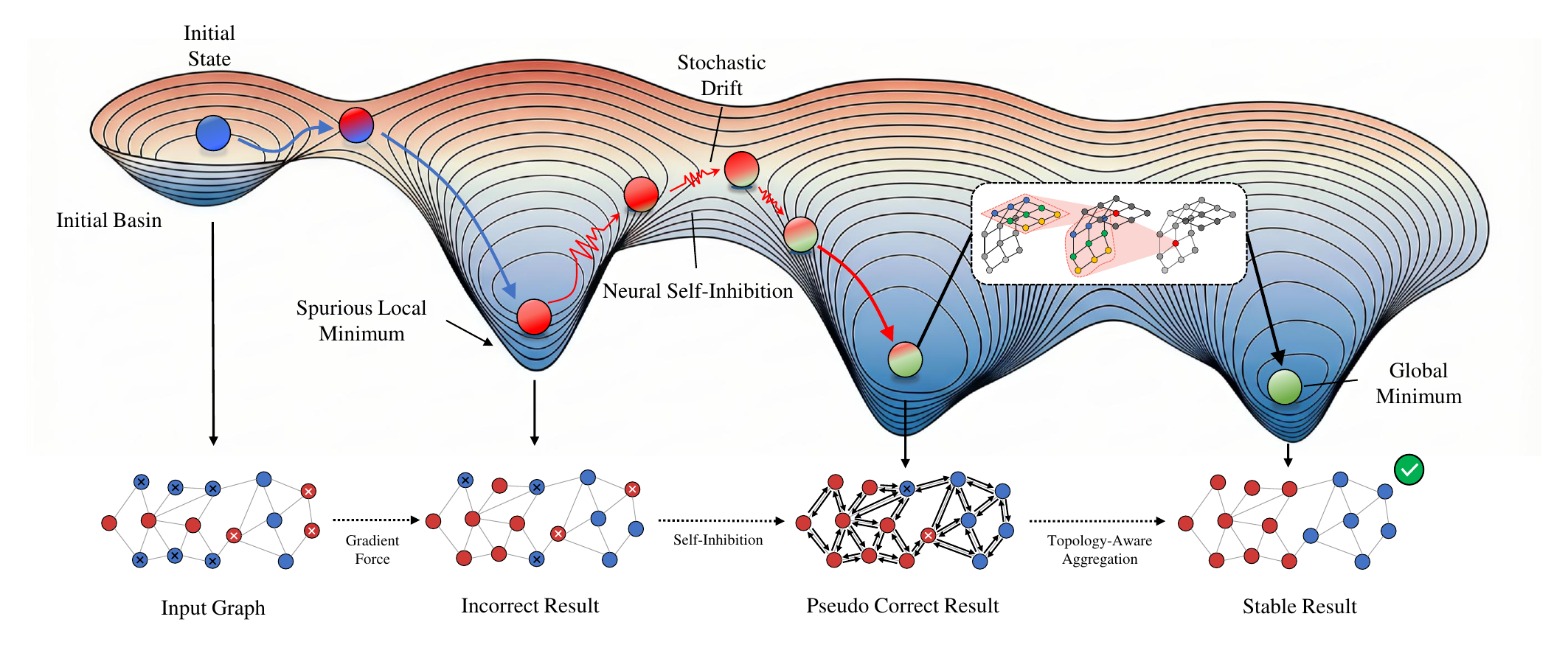}}
    \caption{Demonstration of Dynamics. Illustration of the classification trajectory, with the system state (the small sphere) evolving across the energy surface. The diagram depicts an initial descent into a Spurious Local Minimum (incorrect class), followed by an escape mechanism driven by Neural Self-Inhibition and Stochastic Drift (red wavy arrows). These terms propel the state through a pseudo-correct basin and, via Topology-Aware Updates, into the True Classification Basin (global minimum) for the correct stable result.}
  \label{fig:dynamic}
  \end{center}
\end{figure*}

\subsection{Discretized Dynamic Mechanisms}
While the gradient flow in \cref{eq:grad_flow} provides a fast route to a nearby local energy minimum, its trajectory is fully dictated by the local geometry of $E$.
As the data and interactions become more complex, the resulting energy landscape can become sharply curved and brittle, making the energy design increasingly delicate and prone to abrupt capacity collapse in associative memories ~\citep{sharma2022content}. Based on the observations above, we use a complementary strategy: instead of relying solely on shaping $E$, we seek a more controllable attractor dynamics that can steer states of all objects $X=[\mathbf{x}_1,\mathbf{x}_2,\cdots,\mathbf{x}_N]^\top \in \mathbb{R}^{N \times D} $ toward better basins through explicit excitation--inhibition regulation.
We therefore instantiate the drift field by augmenting the energy-gradient drive with decay, topology-conditioned coupling, and adaptive self-inhibition. For clarity, we present the dynamics for a single token state $\mathbf{x}(t)$ by
\begin{equation}
\tau \frac{d \mathbf{x}}{dt}
= -\mathbf{x}
+  W\mathbf{x}
- \omega \mathbf{x}
- \nabla_{\mathbf{g}} E.
\label{eq:graph_cann_energy}
\end{equation}
\cref{eq:graph_cann_energy} decomposes the drift into four terms: a passive decay $-\mathbf{x}$, a topology-dependent excitation term parameterized by $W \in \mathbb{R}^{D \times D}$, a global damping term $-\omega \mathbf{x}$ where $\omega \in \mathbb{R}_{>0}$ is a learnable scalar decay parameter and the task-directed energy drive $-\nabla_{\mathbf{g}} E$.
Together, these terms define a controllable attractor dynamics that improves basin search while retaining stable convergence.

\paragraph{Topology-conditioned mutual excitation $W\mathbf{x}$.}
For structured token interactions, an analogous construction can be achieved by defining a
distance-based coupling kernel over an interaction structure.
Concretely, let feature dimensions be indexed by $\mathcal{D}=\{1,\cdots,D\}$ and endow $\mathcal{D}$ with a task-dependent
distance $d:\mathcal{D}\times\mathcal{D}\rightarrow \mathbb{R}_{\ge 0}$. Thus, we parameterize the coupling matrix $W_{ij}$ by decomposing it into a fixed geometric prior and a learnable residual, \emph{i.e.}
\begin{equation}
\label{eq:Wij_kernel}
W_{ij}
=
J_0 \exp\!\Bigl(-\frac{d(i,j)^2}{2a^2}\Bigr)
+
W_{ij}^{\mathrm{train}},
\end{equation}
where the Gaussian term imposes a locality bias, with $J_0$ scaling the excitation strength and $a$ controlling the receptive range; $W_{ij}^{\mathrm{train}}$ provides a global correction for long-range dependencies.
Crucially, the coupling is heterogeneous in feature space: the neighborhood structure induced by $d(i,j)$ is generally non-uniform across dimensions, which breaks the translation-like symmetry required for continuous attractors.
As a result, activity bumps are no longer equivalent under shifts on the feature manifold; instead they become biased toward structurally preferred regions (\emph{i.e.}, dimensions with denser or stronger effective couplings), yielding a finite set of stable fixed points rather than a continuum of equivalent states~\citep{seeholzer2019stability}.

\paragraph{Global damping $-\omega \mathbf{x}$ for energy dissipation and stabilization.}
The self-inhibition term $-\omega \mathbf{x}$ can be viewed as a coarse linear model of spike-frequency adaptation (SFA), implementing a delayed negative-feedback loop whereby sustained activity elicits an opposing inhibitory current that counteracts excitation~\citep{gutkin2014spike}.
From a dynamical systems perspective, such negative feedback can destabilize spatially localized steady-state attractors. As shown in Fig.~\ref{fig:dynamic}, in the context of energy landscape optimization, the resulting intrinsic velocity plays a role analogous to inertial momentum: when the trajectory becomes trapped in shallow local minima, accumulated activity generates a repulsive effect that pushes the state away from its current location~\citep{NIPS2014_57c76ace}. Consequently, self-inhibition facilitates barrier crossing and promotes convergence toward deeper minima that correspond to the true class centers.

\subsection{Euler Discretization and Layerwise Interpretation}

As discussed above, CDAT goes beyond plain negative-gradient descent by introducing a controlled, higher-order dynamical system that governs trajectory evolution, rather than merely shaping the energy landscape. While the design is grounded in classical excitation--inhibition attractor mechanisms, our goal is not to reproduce a specific CANN model~\citep{amari1977dynamics}, but to expose a control interface that is both principled and implementation-friendly.
% In particular, mutual excitation and self-inhibition induce two complementary components after discretization:
% (i) a parameter-efficient, topology-conditioned convolution/message-passing operator, and
% (ii) an adaptive inhibition (gain-control) term that acts as a learnable state-dependent gating, yielding Langevin-like mobility in the trajectory.

In this view, the resulting discretized updates map directly to modern deep architectures. The excitation term corresponds to an additional structural mixing layer, while the inhibition term behaves like an extra pointwise transformation on the updated state. To make this connection explicit, we define the step size $\alpha:=\Delta t/\tau$ and apply a first-order (Euler) discretization to the dynamics in \cref{eq:graph_cann_energy}, yielding the state evolution of $\mathbf{x}$ at step $t$ as
\begin{equation}
\begin{split}
{\mathbf{x}}^{t+\Delta t}
&=
{\mathbf{x}}^t
+
\alpha \cdot
\Bigl[
W \cdot {\mathbf{x}}^t
-
(\omega+1) {\mathbf{x}}^t-\nabla_{ \mathbf{g}} E^t
\Bigr].
\end{split}
\label{eq:update_rule_en}
\end{equation}
We next establish a theoretical link between the proposed neurodynamics and modern neural networks. In particular, we show that the Euler-discretized evolution is isomorphic to a residual convolutional layer~\citep{lecun2002gradient} with a strong local-connectivity inductive bias. This viewpoint unifies message passing on graphs and convolution on grids as two instances of the same operator form.

\begin{theorem}[Excitation increment as residual convolution]
\label{prop:resgcn_equivalence}
Consider the Euler-discretized update of the proposed dynamics~\eqref{eq:update_rule_en}.
When the coupling matrix $W$ in the mutual-excitation term is instantiated using the kernel-based parametrization in~\eqref{eq:Wij_kernel}, the resulting mutual-excitation increment $\alpha W \mathbf{x}^{t}$ implements a feature-space convolution, corresponding to a globally parameterized aggregation operator defined by $W$. This structure naturally induces a ResCNN-style residual update, i.e.,
\begin{align}
  \mathbf{x}^{t+\Delta t}=\underbrace{\mathbf{x}^{t}}_{\text{Residual}}+\underbrace{\alpha W\mathbf{x}^{t}}_{\text{Convolution}}.  
\end{align}
\end{theorem}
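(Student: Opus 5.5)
The plan is to prove the statement by isolating the excitation increment in \eqref{eq:update_rule_en} and then reading the kernel parametrization \eqref{eq:Wij_kernel} as a convolution. First I regroup the Euler update as
\begin{equation*}
\mathbf{x}^{t+\Delta t}=\mathbf{x}^{t}+\alpha W\mathbf{x}^{t}-\alpha\bigl[(\omega+1)\mathbf{x}^{t}+\nabla_{\mathbf{g}}E^{t}\bigr].
\end{equation*}
The bracketed term is pointwise: it is a scalar gain on $\mathbf{x}^t$ together with the energy drive, which is handled separately as the attention/Hopfield part. The only cross-dimensional mixing therefore comes from $\alpha W\mathbf{x}^t$. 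Setting aside the pointwise and energy terms, or absorbing them into a rescaled identity branch via $(1-\alpha(\omega+1))\mathbf{x}^t$, gives exactly the displayed identity-plus-operator structure. This is the standard residual form $\mathbf{y}=\mathbf{x}+\mathcal{F}(\mathbf{x})$ with $\mathcal{F}=\alpha W$.

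Next I show that $\mathcal{F}$ is a convolution. Substituting \eqref{eq:Wij_kernel} gives, componentwise,
\begin{equation*}
(\alpha W\mathbf{x}^t)_i=\alpha J_0\sum_{j\in\mathcal{D}}\kappa\bigl(d(i,j)\bigr)x_j^t+\alpha\sum_{j}W^{\mathrm{train}}_{ij}x^t_j,
\qquad \kappa(r)=\exp\!\Bigl(-\frac{r^2}{2a^2}\Bigr).
\end{equation*}
The first sum is a kernel aggregation whose weights depend on indices only through the distance $d$. I split into two cases.

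\emph{Case 1: grid.} If $\mathcal{D}$ carries a translation-invariant metric, such as $d(i,j)=|i-j|$ or the cyclic distance, then $\kappa(d(i,j))=k(i-j)$. The matrix is then Toeplitz (or circulant), and the sum is literally the discrete convolution $(k*\mathbf{x}^t)_i$. Its effective receptive field is $O(a)$, since the kernel is numerically negligible beyond a few multiples of $a$, which gives the local-connectivity bias.

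\emph{Case 2: graph.} If $d$ is a graph (e.g.\ shortest-path) distance on $\mathcal{D}$, the same expression is a message-passing operator $\sum_j A_{ij}x_j$ with distance-weighted adjacency $A_{ij}=\kappa(d(i,j))$. This is a graph convolution in the spatial sense, and Case 1 is the special case of a path or cycle graph.

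In both cases the residual term $W^{\mathrm{train}}$ is an arbitrary dense linear map. I would interpret it as a globally parameterized (full-receptive-field) convolution, i.e.\ a $1\times1$-style channel mixing across all of $\mathcal{D}$. Combining the two parts, $W$ is a single globally parameterized aggregation operator with a local Gaussian prior, as claimed.

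The main obstacle is precision about what ``convolution'' means, since the statement is largely interpretive. Heterogeneous $d$, which the paper stresses breaks translation symmetry, prevents $W$ from being Toeplitz. I would therefore state the result at two levels: an exact convolution under a translation-invariant metric, and a spatial graph convolution (weight sharing through $\kappa$) in general. I would also state explicitly which terms of \eqref{eq:update_rule_en} are dropped or absorbed to obtain the displayed two-term form. This makes the equivalence rigorous as an operator identity rather than a loose analogy.
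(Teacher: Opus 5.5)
Your proposal follows essentially the paper's route. Like the paper, you isolate the excitation component, Euler-discretize it to $(I+\alpha W)\mathbf{x}^t$, and read the Gaussian part of $W$ as a spatial/graph convolution on the feature graph, with $W^{\mathrm{train}}$ supplying non-local couplings. Your split into an exact Toeplitz/circulant case and a general graph-convolution case is more careful than the paper's bare appeal to a ``discrete approximation of a continuous convolution''.

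Two small corrections. First, $\nabla_{\mathbf{g}}E$ is not pointwise: it mixes coordinates through LayerNorm, $\mathbf{W}^Q$, $\mathbf{W}^K$ and $\xi$. So justify the displayed two-term identity as the paper does, by restricting to the excitation dynamics $\tau\dot{\mathbf{x}}=W\mathbf{x}$. Do not justify it by claiming $\alpha W\mathbf{x}^t$ is the only cross-dimensional term, nor by absorbing terms into a rescaled identity (that changes the identity coefficient and cannot absorb the energy drive). Second, since the indices in $\mathcal{D}$ play the role of positions, $W^{\mathrm{train}}$ is a complete-graph (full-receptive-field) aggregation, not a $1\times 1$ channel mixing.
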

Theorem~\ref{prop:resgcn_equivalence} establishes an explicit correspondence between the Euler-discretized dynamics and a feature-space convolutional update: the mutual-excitation term instantiates aggregation through a Gaussian heat-kernel operator, further modulated by learnable coupling weights. See Appendix~\ref{app:gcn-proof} for the proof.

\subsection{Dissipation Analysis}
To certify the dissipative structure of the modified dynamics (\cref{eq:graph_cann_energy}), 
we introduce the following storage functional along trajectory as
\begin{equation}
\label{eq:Vt_def}
V(t)=E(\mathbf{g}(t))
+\int_{t_0}^{t}\Bigl(\bigl((1+\omega) \mathbf{x}(s)-W \mathbf{x}(s)\bigr)^\top \dot{ \mathbf{g}}(s)\Bigr)\,ds,
\end{equation}
where $\mathbf{g}(t)=\mathbf{g}(\mathbf{x}(t))$ for brevity; $t_0$ is an arbitrary initial time.
The integral is well-defined since $\mathbf{x}(\cdot)$ and $\mathbf{g}$ are continuously differentiable functions of one order, hence $\dot{ \mathbf{g}}(\cdot)$ is continuous.
We next establish the key Jacobian property in Lemma~\ref{lem:LN_Jacobian_PSD} ~\citep{tang2021remarkpaperkrotovhopfield,DBLP:journals/corr/abs-2107-06446} before proving dissipation.

\begin{lemma}[LayerNorm Jacobian is PSD]
\label{lem:LN_Jacobian_PSD}
%Let $ \mathbf{x}\in\mathbb R^D, \mathbf{1} \in\mathbb R^D$ be the all-ones vector and define the mean $\mu(\mathbf{x})$ and standard deviation $\sigma( \mathbf{x})=\sqrt{\frac{1}{D}\| \mathbf{x}-\mu( \mathbf{x}) \cdot \mathbf{1}\|_2^2+\varepsilon}$ for $\varepsilon>0$.
Let $\mu(\mathbf{x})$ and $\sigma( \mathbf{x})$ be the mean and standard deviation over $\mathbf{x}\in\mathbb R^D$, with%这儿说的详细些，\mathbf{x}是什么？
\begin{align*}
    \sigma(\mathbf{x})=\sqrt{\frac{1}{D}\|\mathbf{x}-\mu(\mathbf{x})\cdot\mathbf{1}\|_2^2+\varepsilon},\ \varepsilon>0,
\end{align*}
where $\mathbf{1}\in\mathbb R^D$ is the all-ones vector.
For $\mathbf{g}(\mathbf{x})=\gamma\,\frac{ \mathbf{x}-\mu( \mathbf{x}) \cdot \mathbf{1}}{\sigma( \mathbf{x})}$ with $\gamma>0$, its Jacobian matrix $M(\mathbf{x})=\nabla_{ \mathbf{x}} \mathbf{g}(\mathbf{x})$ is symmetric positive semidefinite.
\end{lemma}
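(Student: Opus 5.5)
We compute the Jacobian explicitly and read off both properties. Write $P = I - \frac{1}{D}\mathbf{1}\mathbf{1}^\top$ for the centering projector, so that $\mathbf{x}-\mu(\mathbf{x})\mathbf{1} = P\mathbf{x}$. Set $\mathbf{y}:=P\mathbf{x}$ and $\sigma = \sqrt{\frac{1}{D}\|\mathbf{y}\|_2^2+\varepsilon}$. Then $\mathbf{g} = \gamma\,\mathbf{y}/\sigma$.

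\textbf{Step 1: the Jacobian.} Since $P$ is symmetric and idempotent, $\nabla_{\mathbf{x}}\mathbf{y}=P$. Also $\nabla_{\mathbf{x}}\sigma = \frac{1}{D\sigma}P\mathbf{y} = \frac{1}{D\sigma}\mathbf{y}$, using $P\mathbf{y}=\mathbf{y}$. The quotient rule then gives
\begin{align*}
M(\mathbf{x}) = \frac{\gamma}{\sigma}\Bigl(P - \frac{1}{D\sigma^2}\,\mathbf{y}\mathbf{y}^\top\Bigr).
\end{align*}
This is a combination of the symmetric matrices $P$ and $\mathbf{y}\mathbf{y}^\top$, so $M$ is symmetric.

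\textbf{Step 2: positive semidefiniteness.} Take any $\mathbf{v}\in\mathbb{R}^D$ and set $\mathbf{u}=P\mathbf{v}$. Because $P\mathbf{y}=\mathbf{y}$, we have $\mathbf{y}^\top\mathbf{v} = \mathbf{y}^\top\mathbf{u}$, and $\mathbf{v}^\top P\mathbf{v} = \|\mathbf{u}\|^2$. Hence
\begin{align*}
\mathbf{v}^\top M\mathbf{v} = \frac{\gamma}{\sigma}\Bigl(\|\mathbf{u}\|^2 - \frac{(\mathbf{y}^\top\mathbf{u})^2}{D\sigma^2}\Bigr).
\end{align*}
By Cauchy--Schwarz, $(\mathbf{y}^\top\mathbf{u})^2\le \|\mathbf{y}\|^2\|\mathbf{u}\|^2$. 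Moreover $D\sigma^2 = \|\mathbf{y}\|^2 + D\varepsilon > \|\mathbf{y}\|^2$. Together these give
\begin{align*}
\mathbf{v}^\top M\mathbf{v} \ge \frac{\gamma}{\sigma}\|\mathbf{u}\|^2\Bigl(1-\frac{\|\mathbf{y}\|^2}{\|\mathbf{y}\|^2+D\varepsilon}\Bigr)\ge 0.
\end{align*}
The prefactor is nonnegative because $\gamma>0$ and $\sigma>0$. Equality holds exactly when $\mathbf{u}=0$, i.e.\ along $\mathbf{1}$. So $M$ is PSD but not definite, consistent with the statement.

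\textbf{Main obstacle.} The only delicate point is the gradient of $\sigma$: one must check that the $\mu$-dependence contributes nothing extra. It does not, because $\mathbf{y}$ is already centered. A second point of care is that the statement takes $\gamma$ as a positive scalar. With a per-coordinate vector $\gamma$, the Jacobian would become $\mathrm{diag}(\gamma)$ times the above, and symmetry would fail. The argument therefore applies to scalar $\gamma$ only, as stated.
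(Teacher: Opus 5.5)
Your proof is correct, but it takes a different route from the paper. The paper does not actually prove this lemma; it defers to Tang--Kopp and Krotov's hierarchical associative memory framework, where the argument is structural rather than computational. Your computation checks out: $M=\frac{\gamma}{\sigma}\bigl(P-\frac{1}{D\sigma^2}\mathbf{y}\mathbf{y}^\top\bigr)$. The Cauchy--Schwarz step in fact gives the sharper bound $\mathbf{v}^\top M\mathbf{v}\ge \frac{\gamma\varepsilon}{\sigma^3}\|P\mathbf{v}\|^2$, since $\frac{D\varepsilon}{\|\mathbf{y}\|^2+D\varepsilon}=\varepsilon/\sigma^2$.

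In the cited framework, LayerNorm with scalar $\gamma$ is the gradient of the Lagrangian $L(\mathbf{x})=\gamma D\,\sigma(\mathbf{x})$. Indeed $\nabla_{\mathbf{x}}L=\gamma D\cdot\frac{1}{D\sigma}\mathbf{y}=\mathbf{g}$, using exactly your formula for $\nabla\sigma$. So $M=\nabla^2 L$ is automatically symmetric. Moreover $L$ is convex, because $\sqrt{D}\,\sigma(\mathbf{x})=\bigl\|(P\mathbf{x},\sqrt{D\varepsilon})\bigr\|_2$ is a Euclidean norm of an affine map. PSD-ness then follows with no further calculation.

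What each route buys:
\begin{itemize}
\item The Lagrangian route is shorter and gets symmetry for free. It also explains your scalar-$\gamma$ caveat: a per-coordinate $\gamma$ destroys the gradient structure, so no Lagrangian exists. And it applies verbatim to any activation defined as the gradient of a convex Lagrangian, which is the form the dissipation theorem really relies on.
\item Your explicit route is self-contained and yields strictly more information. It identifies the exact kernel $\mathrm{span}(\mathbf{1})$, shows strict positivity on $\mathbf{1}^\perp$ (which is where $\varepsilon>0$ enters), and gives an explicit curvature constant. The convexity argument alone does not provide these.
\end{itemize}
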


\begin{theorem}[Dissipation certificate for the modified dynamics]
\label{thm:dissipation_certificate}
Let $\mathbf{x}(t)$ be any solution of dynamics
\begin{align*}
\tau \dot{ \mathbf{x}} &= - \mathbf{x} + W \mathbf{x} - \omega  \mathbf{x} - \nabla_{ \mathbf{g}}E,
\end{align*}
with $\ W^\top=W$. Suppose $\mathbf{g}$ represents the Layer-normalized output of $\mathbf{x}$ as defined in Lemma ~\ref{lem:LN_Jacobian_PSD}. Then the trajectory-wise storage functional $V(t)$ for any fixed $t_0$ as ~\cref{eq:Vt_def} is non-increasing, i.e.,
\begin{align}
   \frac{d}{dt}V(t)=-\tau\,\dot{ \mathbf{x}}(t)^\top M( \mathbf{x}(t))\,\dot{ \mathbf{x}}(t) \le 0,
\end{align}
where $M(\mathbf{x}(t))=\nabla_{\mathbf{x}}\mathbf{g}(\mathbf{x}(t))$ is the Jacobian matrix of $\mathbf{g}$.% respect to $\mathbf{x}$.%; $\mathbf{g}(\mathbf{x}(t))=\gamma\,\frac{ \mathbf{x}(t)-\mu( \mathbf{x}(t)) \cdot \mathbf{1}}{\sigma( \mathbf{x}(t))}$ with $\gamma>0$.% and $W=W^\top$. 
%By the chain rule, $\dot{ \mathbf{g}}(t)=M( \mathbf{x}(t))\dot{ \mathbf{x}}(t)$. Then, if $M( \mathbf{x}(t))\succeq 0$, we have 
%Hence $V(t)$ is non-increasing.
\end{theorem}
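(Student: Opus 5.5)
The plan is to differentiate $V(t)$ directly along a trajectory, rewrite every term using the dynamics, and close the argument with Lemma~\ref{lem:LN_Jacobian_PSD}.

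\textbf{Step 1: differentiate the two parts of $V$.} The energy term is handled by the chain rule in the variable $\mathbf{g}$:
\[
\frac{d}{dt}E(\mathbf{g}(t))=\nabla_{\mathbf{g}}E^\top\dot{\mathbf{g}}(t).
\]
The integral term is handled by the fundamental theorem of calculus. Its integrand is continuous, since $\mathbf{x}$ is $C^1$ and $\mathbf{g}$ is smooth (because $\varepsilon>0$ keeps $\sigma$ bounded away from zero). This gives
\[
\frac{d}{dt}\int_{t_0}^{t}\cdots\,ds=\bigl((1+\omega)\mathbf{x}(t)-W\mathbf{x}(t)\bigr)^\top\dot{\mathbf{g}}(t).
\]
Adding the two pieces,
\[
\dot V=\bigl(\nabla_{\mathbf{g}}E+(1+\omega)\mathbf{x}-W\mathbf{x}\bigr)^\top\dot{\mathbf{g}}.
\]

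\textbf{Step 2: substitute the dynamics.} The bracket in $\dot V$ is exactly $-\tau\dot{\mathbf{x}}$, because the ODE reads
\[
\tau\dot{\mathbf{x}}=-(1+\omega)\mathbf{x}+W\mathbf{x}-\nabla_{\mathbf{g}}E .
\]
Hence $\dot V=-\tau\,\dot{\mathbf{x}}^\top\dot{\mathbf{g}}$.

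\textbf{Step 3: apply the chain rule and the lemma.} The chain rule gives $\dot{\mathbf{g}}=M(\mathbf{x})\dot{\mathbf{x}}$. Substituting,
\[
\dot V=-\tau\,\dot{\mathbf{x}}^\top M(\mathbf{x})\,\dot{\mathbf{x}},
\]
which is the claimed identity. Lemma~\ref{lem:LN_Jacobian_PSD} says $M$ is symmetric PSD, and $\tau>0$, so the quadratic form is nonnegative and $\dot V\le 0$. Therefore $V$ is non-increasing for any fixed $t_0$.

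\textbf{Role of $W^\top=W$.} In this trajectory-wise formulation the symmetry of $W$ is not actually needed for the identity. The path integral absorbs the non-gradient terms regardless of whether $W$ is symmetric. I would remark on this, and note that symmetry matters only if one wants to write the integral as a genuine state function of $\mathbf{x}$; otherwise I would keep the hypothesis as stated and not use it.

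\textbf{Main obstacle.} The only genuinely delicate point is Step 3's reliance on $M$ being symmetric PSD, which is supplied by the lemma. If I had to verify it directly, I would write
\[
M=\frac{\gamma}{\sigma}\Bigl(I-\tfrac1D\mathbf{1}\mathbf{1}^\top-\tfrac{1}{D\sigma^2}\mathbf{c}\mathbf{c}^\top\Bigr),\qquad \mathbf{c}=\mathbf{x}-\mu\mathbf{1}.
\]
Then I would check nonnegativity of this projector-minus-rank-one form. The relevant fact is $\|\mathbf{c}\|^2/(D\sigma^2)<1$ since $\varepsilon>0$, together with $\mathbf{c}\perp\mathbf{1}$. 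I would also need to check that the dimension conventions are consistent, i.e.\ that the gradient with respect to $\mathbf{g}$ is a row/column vector compatible with $\dot{\mathbf{g}}$.
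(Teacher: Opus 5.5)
Your proposal is correct and follows the paper's proof essentially step for step: the chain rule on $E(\mathbf{g}(t))$, the fundamental theorem of calculus on the integral term, substitution of the rearranged dynamics to get $\dot V=-\tau\,\dot{\mathbf{x}}^\top\dot{\mathbf{g}}$, and finally $\dot{\mathbf{g}}=M(\mathbf{x})\dot{\mathbf{x}}$ together with Lemma~\ref{lem:LN_Jacobian_PSD}. You are also right that $W^\top=W$ is never used (the paper assumes it but never invokes it), though your side remark gives symmetry too much credit: even for symmetric $W$, the $W\mathbf{x}$ part of the integral is path-independent only if $W M(\mathbf{x})=M(\mathbf{x})W$ for all $\mathbf{x}$, so symmetry alone does not turn it into a state function either.
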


%\begin{theorem}[Dissipation certificate for the modified dynamics]
%\label{thm:dissipation_certificate}
%Assume $\mathbf{g}:\mathbb{R}^D\to\mathbb{R}^D$ and $E:\mathbb{R}^{N \times D}\to\mathbb{R}$ are continuously differentiable functions of one order, the coupling matrix is symmetric, i.e., $W=W^\top$. For fixed parameters (including the learnable scalar $\omega$ during the rollout), let $\mathbf{x}(t)$ be any solution of
%\[
%\begin{aligned}
%\tau \dot{ \mathbf{x}} &= - \mathbf{x} + W \mathbf{x} - \omega  \mathbf{x} - \nabla_{ \mathbf{g}}E.
%\end{aligned}
%\]
%Define the trajectory-wise storage functional $V(t)$ (for any fixed $t_0$) as ~\eqref{eq:Vt_def}.
%Let $M( \mathbf{x}):=\nabla_{ \mathbf{x}} \mathbf{g}( \mathbf{x})$. By the chain rule, $\dot{ \mathbf{g}}(t)=M( \mathbf{x}(t))\dot{ \mathbf{x}}(t)$. Then, if $M( \mathbf{x}(t))\succeq 0$, we have 
%\[
%\frac{d}{dt}V(t)
%=
%-\tau\,\dot{ \mathbf{x}}(t)^\top M( \mathbf{x}(t))\,\dot{ \mathbf{x}}(t) \le 0.
%\]
%Hence $V(t)$ is non-increasing.
%\end{theorem}

The monotonic decrease of $V(t)$ establishes that the proposed dynamics is energy-dissipative. As in prior work on Associative Memory models~\citep{saha2023end}, we use this property to characterize the attractor structure of the dynamics. A detailed proof is deferred to Appendix~\ref{app:dissipation-proof}.

\paragraph{Symmetric, low-rank parameterization of the coupling.}
The dissipation result in Theorem~\ref{thm:dissipation_certificate} assumes a symmetric coupling $W=W^\top$.
This is both theoretically and practically motivated.
Theoretically, symmetry aligns the mutual-excitation term with an energy-based interaction and avoids non-conservative rotational components.
On the implementation side, we enforce symmetry by parameterizing
\begin{equation}
W \;=\; P^\top Q P,
\label{eq:sym_lowrank_W}
\end{equation}
where $P\in\mathbb{R}^{r\times D}$ with $r\ll D$; $Q\in\mathbb{R}^{r\times r}$ is diagonal (or block-diagonal).
By construction, $W$ is symmetric, and its rank is at most $r$ when $Q$ is full-rank.
This yields a parameter-efficient update rule reminiscent of LoRA:
the mutual-excitation operator is obtained from a low-dimensional bottleneck $P$ and a lightweight scaling $Q$,
reducing the number of free parameters from $\mathcal{O}(D^2)$ to $\mathcal{O}(rD)$ while retaining expressive, learnable couplings~\citep{hu2022lora}.
In our experiments, this constraint stabilizes training and improves the robustness of the controlled dynamics.

\section{Associative Memories for \textbf{CDAT}}%: An Energy-Based Framework with Von Mises–Fisher Probabilistic Modeling}
The controlled dynamics established in Section~3 (\cref{eq:graph_cann_energy}) are driven by the gradient of a semantic potential, $-\nabla_{ g}E$. In this section, we instantiate this energy $E$ to explicitly guide tokens toward consistent configurations via two complementary forces: input-conditioned alignment and prototype-based refinement.
Given states $X\in\mathbb{R}^{N\times D}$ and relational structure $\mathcal{G}$, we define the semantic energy of CDAT as
\begin{equation}
\label{eq:CDAT_total_energy}
E(X;\mathcal{G})
=
\lambda_{\mathrm{v}} E^{\mathrm{ATT-vMF}}(X)
+
\lambda_{\mathrm{h}} E^{\mathrm{HN}}(X),
\end{equation}
where $E^{\mathrm{ATT-vMF}}$ induces directional semantic alignment via a Mo–vMF interpretation of self-attention, and $E^{\mathrm{HN}}$ implements Hopfield-type prototype retrieval and sharpening.
$\lambda_{\mathrm{v}}$ and $\lambda_{\mathrm{h}}$ represent the balancing weights, satisfying $\lambda_{\mathrm{v}} + \lambda_{\mathrm{h}} = 1$.

% \begin{figure}[t]
%   \vskip 0.2in
%   \begin{center}
%     \centerline{\includegraphics[width=0.8\columnwidth]{figures/DD_vmf.pdf}}
%     \caption{
%       \textbf{Mo--vMF view of self-attention on DD.}
% Visualizing learned geometry on the DD dataset reveals hyperspherical clustering, where the induced directions $\mu_B$ act as dynamic semantic prototypes. Crucially, these prototypes are input-dependent, contrasting with the static stored memories of classical Hopfield networks.
%     }
%     \label{fig:movmf}
%   \end{center}
% \end{figure}

\subsection{Mo-vMF Term $E^{\mathrm{ATT-vMF}}$: Self-attention as Directional Mixture Modeling}
\label{sec:CDAT-movmf}

Energy-based self-attention~\citep{ramsauer2021hopfield} casts a Transformer attention block as minimizing an explicit energy function. This interpretation motivates us to treat token interactions as a differentiable dynamical system parameterized by the usual query/key projections. %In the following subsections, we mainly demonstrate that under unit-norm queries/keys, the attention energy function is equivalent to the Mo-vMF mixture likelihood form, thereby interpreting attention as a special case of directional mixture models.
Consider a state vector $\mathbf{x}$ and its LayerNorm-normalized output $\mathbf{g}$. For each head $h \in \{1,\cdots,H\}$, we obtain query and key vectors by linearly embedding
$\mathbf{g}$ into an internal feature space of dimension $Y$ via learnable tensors
$\mathbf{W}^Q,\mathbf{W}^K \in \mathbb{R}^{Y\times H\times D}$. The query and key vectors are obtained as 
\begin{align}
    Q_{hi} = \frac{\mathbf{W}^Q_h \mathbf{g}_i}{\|\mathbf{W}^Q_h \mathbf{g}_i\|_2},\ 
    K_{hi} = \frac{\mathbf{W}^K_h \mathbf{g}_i}{\|\mathbf{W}^K_h \mathbf{g}_i\|_2}.
\end{align}
%$Q_{hi} = \frac{\mathbf{W}^Q_h \mathbf{g}_i}{\|\mathbf{W}^Q_h \mathbf{g}_i\|_2} $ and $K_{hi} = \frac{\mathbf{W}^K_h \mathbf{g}_i}{\|\mathbf{W}^K_h \mathbf{g}_i\|_2}$.
The energy of the attention mechanism is defined as% ($B,C \in \{1,\cdots,N\}$)
\begin{equation}
\label{eq:attn-movmf}
\begin{aligned}
E^{\text{ATT-vMF}}
&= -\frac{1}{\beta} 
   \sum_{h=1}^{H} \sum_{C=1}^{N} 
   \log \Biggl(
      \sum_{B \ne C} 
      \exp\bigl( \beta \, Q_{hB}^\top K_{hC} \bigr)
   \Biggr).
\end{aligned}
\end{equation}

We connect the log-sum-exp term in $E^{\text{ATT-vMF}}$ to a probabilistic mixture model on the unit sphere
$S^{Y-1}=\{l\in\mathbb{R}^{Y}:\|l\|=1\}$. The von Mises--Fisher distribution on $S^{Y-1}$ with mean direction
$\mu\in S^{Y-1}$ and concentration $\beta$ has density
\begin{equation}
\label{eq:vmf_def}
p_{\mathrm{vMF}}(q\mid \mu,\beta)
=
c_Y(\beta)\exp\!\bigl(\beta\,\mu^\top q\bigr),
\end{equation}
where $q\in S^{Y-1}$ and $c_Y(\beta)$ is a normalizing constant depending only on $(Y,\beta)$.
Thus, Mo--vMF with shared concentration $\beta$ is
\begin{equation}
\label{eq:movmf_def}
p_{\mathrm{Mo\text{-}vMF}}(q)
=
\sum_{b=1}^{M}\pi_b\, p_{\mathrm{vMF}}(q\mid \mu_b,\beta),
\end{equation}
where $\pi_b\ge 0$, $\sum_{b=1}^{M}\pi_b=1$, and $\mu_b\in S^{Y-1}$ for all $b$.

\begin{theorem}[Self-attention energy as a Mo–vMF negative log-likelihood]
\label{prop:movmf}
Consider single-head self-attention situation with normalized query and
key vectors $Q_B,K_C\in S^{Y-1}$, we define the attention energy with $\beta>0$ as
\begin{equation}
\label{eq:att_energy}
E^{\mathrm{ATT-vMF}}
=
-\frac{1}{\beta}
\sum_{C=1}^{N}
\log\!\left(
\sum_{B\neq C}
\exp\bigl(\beta\, Q_B^\top K_C\bigr)
\right).
\end{equation}
Then, ignoring constant terms independent of $\{Q_B,K_C\}$, $E^{\mathrm{ATT-vMF}}$ is equivalent to the negative log-marginal likelihood of a Mo-vMF model on the hypersphere $S^{Y-1}$ with shared concentration $\beta$. 
\end{theorem}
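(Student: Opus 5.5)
The plan is to match each summand of \eqref{eq:att_energy} with the log-density of a Mo--vMF model, one key at a time. For each fixed $C$, we treat the key $K_C\in S^{Y-1}$ as the observation $q$ on the sphere. The normalized queries $\{Q_B\}_{B\neq C}$ serve as the mixture mean directions $\mu_b$. There are $M=N-1$ components, and they carry uniform mixing weights $\pi_b=1/(N-1)$ and the shared concentration $\beta$. Since $\|Q_B\|=\|K_C\|=1$, every $\mu_b$ lies on $S^{Y-1}$, so \eqref{eq:vmf_def} applies directly with $Q_B^\top K_C=\mu_b^\top q$.

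The central computation substitutes into \eqref{eq:movmf_def}:
\begin{equation*}
\log p_{\mathrm{Mo\text{-}vMF}}(K_C)=\log c_Y(\beta)-\log(N-1)+\log\sum_{B\neq C}\exp\bigl(\beta\,Q_B^\top K_C\bigr).
\end{equation*}
Here $c_Y(\beta)$ depends only on $(Y,\beta)$ and not on the means, because the vMF normalizer is rotation-invariant. Hence the first two terms do not depend on $\{Q_B,K_C\}$.

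Summing over $C$ gives
\begin{equation*}
\sum_{C=1}^N \log p_{\mathrm{Mo\text{-}vMF}}(K_C)=-\beta\,E^{\mathrm{ATT-vMF}}+N\bigl(\log c_Y(\beta)-\log(N-1)\bigr).
\end{equation*}
Rearranging,
\begin{equation*}
E^{\mathrm{ATT-vMF}}=\frac{1}{\beta}\Bigl(-\sum_C \log p_{\mathrm{Mo\text{-}vMF}}(K_C)\Bigr)+\text{const}.
\end{equation*}
So $E^{\mathrm{ATT-vMF}}$ equals the negative log-marginal likelihood of the keys, rescaled by the positive factor $1/\beta$ and shifted by a constant. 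Because $\beta>0$ is fixed, this rescaling leaves the minimizers and gradient directions unchanged. This is the sense of ``equivalent'' that I will spell out.

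The main obstacle is stating precisely what ``marginal likelihood'' means, because the mixture components change with $C$: the exclusion $B\neq C$ means each observation has its own set of means. I would handle this by framing the model as a conditional (leave-one-out) Mo--vMF for each $C$. Each key's density is then a genuine mixture, marginalized over a latent assignment variable $z_C\in\{B\neq C\}$ with uniform prior. The total log-likelihood is the sum of these per-token log-marginals, with keys treated as conditionally independent given the queries. Writing $p(K_C)=\sum_{b}p(z_C=b)\,p(K_C\mid z_C=b)$ makes the marginalization explicit.

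I would also remark that the roles can be swapped, with the queries as means and the keys as data. I would further note that the multi-head version \eqref{eq:attn-movmf} follows by summing over heads, treating them as independent models.
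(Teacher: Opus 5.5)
Your proposal is correct and follows essentially the same route as the paper: the keys $K_C$ are the observations and the queries $Q_B$ are the mean directions, with leave-one-out uniform weights $\pi_{B|C}=1/(N-1)$ for $B\neq C$ and shared concentration $\beta$, and the per-key negative log-marginals are summed over $C$. Your direct computation is slightly tighter than the paper's, which gets the proportionality by appealing to the Boltzmann principle $p\propto \exp(-\beta E)$; you instead make explicit both the factor $1/\beta$ and the additive constant $\frac{N}{\beta}\bigl(\log c_Y(\beta)-\log(N-1)\bigr)$.
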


\cref{prop:movmf} connects attention to directional mixture modeling, where query vectors act as mixture directions and updates encourage angular alignment. A formal statement and proof are deferred to Appendix~\ref{app:movmf-proof}.

We emphasize that the Mo-vMF interpretation serves as a geometric prior defining the attractor basins. While consistent with recent insights~\citep{schaeffer2023associative}, our contribution lies in operationalizing this equivalence to shift the modeling perspective: from implicit energy descent to explicit probabilistic evolution. This theoretical grounding justifies the decomposition in \cref{eq:CDAT_total_energy}, ensuring that the coarse alignment is driven by a rigorous spherical mixture prior.

\begin{table*}[t]
  \caption{%Graph classification performance on seven datasets of TUDataset (higher is better). Following~\citep{morris2020tudataset}, mean and standard deviation obtained from 100 runs of 10-fold cross validation are reported. Abbreviations: MUTAGEN.=MUTAGENICITY.
  Graph classification performance on seven benchmarks (higher is better). Following \citep{morris2020tudataset}, means and standard deviations over 100 runs of 10‑fold cross‑validation are reported. Abbreviation: MUTAGEN. = MUTAGENICITY.}
  \label{tab:classification}
  \begin{center}
%    \begin{small}
%      \begin{sc}
        \begin{tabular}{lccccccc}
\toprule
\textbf{Method} &
\textbf{PROTEINS} &
\textbf{NCI1} &
\textbf{NCI109} &
\textbf{DD} &
\textbf{ENZYMES} &
\textbf{MUTAG} &
\textbf{MUTAGEN.} \\
\midrule

WKPI (k-means)
& $78.5{\pm}0.4$
& $87.5{\pm}0.5$
& $85.9{\pm}0.4$
& $82.0{\pm}0.3$
& \na
& $85.8{\pm}2.5$
& \na \\

GRDL
& $82.6{\pm}1.2$
& $80.4{\pm}0.8$
& \na
& \na
& \na
& $92.1{\pm}5.9$
& \na \\

\midrule

DSGCN
& $77.3{\pm}0.4$
& \na
& \na
& \na
& $78.4{\pm}0.6$
& \na
& \na \\

Norm-GN
& \na
& $84.9{\pm}1.7$
& $83.5{\pm}1.3$
& \na
& $73.3{\pm}8.0$
& \na
& \na \\

GIN + GraphNorm
& $77.4{\pm}4.9$
& $81.4{\pm}2.4$
& $82.4{\pm}1.7$
& \na
& \na
& $91.6{\pm}6.5$
& \na \\

GIN + GRANOLA
& $77.5{\pm}3.7$
& $84.0{\pm}1.7$
& $83.7{\pm}1.6$
& \na
& \na
& $92.2{\pm}4.6$
& \na \\

\midrule

HGP-SL
& $84.9{\pm}1.6$
& $78.5{\pm}0.8$
& $80.7{\pm}1.2$
& $81.0{\pm}1.3$
& $68.8{\pm}2.1$
& \na
& $82.2{\pm}0.6$ \\

HGP
& $79.4{\pm}3.1$
& $74.2{\pm}1.7$
& \na
& $72.8{\pm}5.4$
& $44.5{\pm}7.4$
& $87.9{\pm}5.7$
& $77.9{\pm}1.4$ \\

ASAP
& $74.2{\pm}0.8$
& $71.5{\pm}0.4$
& $70.1{\pm}0.6$
& $76.9{\pm}0.7$
& \na
& \na
& \na \\

MinCutPool
& $74.7{\pm}0.5$
& $74.3{\pm}0.9$
& \na
& \na
& \na
& $90.6{\pm}4.6$
& \na \\

\midrule

U2GNN
& $80.0{\pm}3.2$
& \na
& \na
& \na
& $95.7{\pm}1.9$
& \na
& $88.5{\pm}7.1$ \\

Graphormer
& $76.3{\pm}2.7$
& $78.6{\pm}2.1$
& \na
& \na
& \na
& $89.6{\pm}6.2$
& \na \\

ET
& \textbf{90.3${\pm}$0.7}
& \underline{$90.1{\pm}0.1$}
& \underline{$90.5{\pm}0.1$}
& \underline{$95.9{\pm}0.8$}
& \textbf{99.8}
& \underline{$96.6{\pm}0.2$}
& \underline{$98.7{\pm}0.1$} \\

\midrule
\textbf{CDAT (ours)}
& \underline{$87.1{\pm}0.4$}
& \textbf{93.5${\pm}$0.2}
& \textbf{94.3${\pm}$0.1}
& \textbf{98.5${\pm}$0.6}
& \underline{$99.1{\pm}0.1$}
& \textbf{98.8${\pm}$0.3}
& \textbf{98.8${\pm}$0.1} \\

\bottomrule
       \end{tabular}
%      \end{sc}
%    \end{small}
  \end{center}
%  \vskip -0.1in
\end{table*}

\subsection{Hopfield Refinement Term $E^{\mathrm{HN}}$: Prototype Retrieval and Sharpening}
\label{sec:CDAT-hopfield}

% Our Mo-vMF analysis suggests that self-attention can be viewed as dynamically modeling directional similarity among inputs on a spherical manifold. Its updates correspond to descending an energy landscape toward posterior-weighted semantic prototypes, yielding a form of continuous, input-conditioned self-organization. This perspective naturally connects to classical associative memory models, most notably Hopfield networks~\citep{hopfield1982neural}.

% A Hopfield network explicitly stores a finite set of memory patterns and defines an energy function whose minima act as attractors; the system state evolves to converge to one of these stored memories~\citep{NIPS2016_eaae339c}. While Hopfield memories are fixed a priori—unlike the data-induced, adaptive prototypes in our model—both frameworks share the same core principle: state updates are governed by an underlying energy potential.
Mo--vMF analysis suggests that self-attention can be viewed as dynamically modeling directional similarity among inputs on a spherical manifold, yielding a form of continuous, input-driven self-organization. Furthermore, we introduce a Hopfield energy $E^{\mathrm{HN}}$ to enforce global structural consistency. Unlike the adaptive prototypes in Mo--vMF, this term anchors the dynamics to a set of fixed, learnable memory patterns, providing stable attractors for the trajectory~\citep{hopfield1982neural,NIPS2016_eaae339c}. Concretely, a modern Hopfield-type energy can be written by
\begin{equation}
\begin{split}
E^{\text{HN}}
= -\sum_{B=1}^{N}\sum_{\mu=1}^{P}
\text{ReLU}\!\left(\sum_{j=1}^{D}\xi_{\mu j}\, g_{jB}\right),
\end{split}
\end{equation}
where $\{\xi_\mu\}_{\mu=1}^{P} \in \mathbb{R}^{P\times D}$ denote learnable global stored memory vectors. From a dynamical systems perspective, this energy landscape exhibits $P$ local minima, each corresponding to a stored memory pattern. Overall, $E^{\mathrm{HN}}$ acts as a late-stage refinement mechanism: it performs prototype-style retrieval and sharpening, progressively dominating the trajectory as the system approaches stable attractors.

\begin{table*}[t]
  \caption{%Performance on Yelp, Amazon and T-Finance datasets with different training ratios.  Mean and standard deviation over 5 runs with different train/dev/test splits are reported (standard deviations are only included if they are available in the prior work). Best results are in \textbf{bold} and second-best results are underlined.
  Performance on Yelp, Amazon, and T-Finance across different training ratios. Means and standard deviations (when available from prior work) are reported over five runs with varying train/dev/test splits. Best results are in bold; second-best are underlined.}
  \label{tab:experiment_results_fixed}
  \begin{center}
%    \begin{\small}
%      \begin{sc}
        \begin{tabular}{Clcccccccc}
    \toprule
    \textbf{Metric}&Datasets & Split & CAREGNN & PC-GNN & BWGNN & GT & ET & UniGAD & \textbf{CDAT} \\
    % \midrule
    % \multicolumn{9}{l}{\textbf{MF1}} \\
    \midrule
    \multirow{8}{*}{\textbf{MF1}} 
    &Yelp & 1\% 
     & $62.1{\pm}1.3$ & $59.8{\pm}1.4$ & $61.1{\pm}0.4$ & $61.7{\pm}0.4$ & \underline{$62.7{\pm}1.9$} & -- & \textbf{63.0${\pm}$1.4} \\
    
    &Yelp & 40\% 
    & $63.3{\pm}0.9$ & $63.0{\pm}2.3$ & \textbf{71.0${\pm}$0.9} & $68.7{\pm}0.4$ & $69.5{\pm}0.2$ & -- & \underline{$70.5{\pm}0.1$} \\
    
    &Yelp & 70\% 
     & -- & -- & -- & -- & \underline{70.3${\pm}$0.4} & 70.2 & \textbf{71.5${\pm}$0.2} \\
     \cmidrule{2-10}
    
    &Amazon & 1\% 
     & $68.7{\pm}1.6$ & $79.8{\pm}5.6$ & \underline{$90.9{\pm}0.7$} & $88.6{\pm}0.5$ & $89.3{\pm}0.7$ & -- & \textbf{91.2${\pm}$0.9} \\
    
    &Amazon & 40\% 
    & $86.3{\pm}1.7$ & $85.0{\pm}0.7$ & \textbf{92.2${\pm}$0.4} & $91.7{\pm}0.8$ & $90.4{\pm}1.0$ & -- & \underline{$92.1{\pm}0.6$} \\
    
    &Amazon & 70\% 
     & -- & -- & -- & -- & \textbf{92${\pm}$0.3} & 91.3 & \underline{91.9${\pm}$0.1} \\
     \cmidrule{2-10}
    
   & T-Finance & 1\% 
    & 73.3 & 62.0 & \underline{86.8} & 81.5 & $85.1{\pm}1.0$ & -- & \textbf{87.4${\pm}$1.1} \\
    
   & T-Finance & 40\% 
    & 77.5 & 63.1 & 86.8 & 83.6 & $88.2{\pm}1.0$ & \underline{89.75} & \textbf{90.6${\pm}$0.6} \\
    % \midrule
    
    % \multicolumn{9}{l}{\textbf{AUC}} \\
    \midrule
   \multirow{8}{*}{\textbf{AUC}}& Yelp & 1\% 
    & \underline{$75.0{\pm}3.8$} & \textbf{75.4${\pm}$0.9} & $72.0{\pm}0.5$ & $72.5{\pm}0.6$ & $72.9{\pm}1.3$ & -- & $74.3{\pm}0.3$ \\
    
    &Yelp & 40\% 
     & $76.1{\pm}2.9$ & $79.8{\pm}0.1$ & \underline{$84.0{\pm}0.9$} & $81.9{\pm}0.5$ & $83.6{\pm}2.8$ & -- & \textbf{84.4${\pm}$0.2} \\
    
    &Yelp & 70\% 
     & -- & -- & -- & -- & 81.7${\pm}$1 & \underline{86.23} & \textbf{91.9${\pm}$0.3} \\
    \cmidrule{2-10}
    
    &Amazon & 1\% 
    & $88.6{\pm}3.5$ & $90.4{\pm}2.0$ & $89.4{\pm}0.3$ & $89.0{\pm}1.2$ & \underline{$91.9{\pm}1.0$} & -- & \textbf{94.5${\pm}$1.1} \\
    
    &Amazon & 40\% 
     & $90.5{\pm}1.6$ & $95.8{\pm}0.1$ & \textbf{98.0${\pm}$0.4} & $95.4{\pm}0.6$ & $95.7{\pm}2.0$ & -- & \underline{$97.2{\pm}0.2$} \\
    
    &Amazon & 70\% 
     & -- & -- & -- & -- & \underline{97.1${\pm}$2.1} & \textbf{97.84} & $93.0{\pm}5.0$ \\
    \cmidrule{2-10}
    
    &T-Finance & 1\% 
     & 90.5 & 90.7 & 91.1 & 90.0 & \underline{$92.2{\pm}1.1$} & -- & \textbf{95.91${\pm}$0.4} \\
    
    &T-Finance & 40\% 
     & 92.1 & 91.2 & 94.3 & 88.2 & $95.0{\pm}3.0$ & \underline{96.49} & \textbf{97.84${\pm}$1.1} \\
    
    \bottomrule
    \end{tabular}
%      \end{sc}
%    \end{small}
  \end{center}
%  \vskip -0.1in
\end{table*}

\section{Empirical Evaluation}
Although the CDAT formulation is general and applies to any tokenized relational data, we focus on the graph domain as a principled and challenging testbed. Graph representation learning requires simultaneously preserving local structural fidelity, integrating global context, and remaining robust to noisy features and perturbations~\citep{hamilton2017representation, wu2020comprehensive}. Classic message-passing models such as GCNs ~\citep{kipf2017semisupervised} and their variants usually suffer from over-smoothing at depth, while attention-based graph models (\emph{e.g.}, GAT~\citep{veličković2018graph}) may over-globalize interactions or face scalability issues. These challenges make graphs a natural benchmark for studying whether energy-based attractor dynamics can stabilize learning and improve discriminability. 

In following sections, we empirically validate the CDAT's benefits on standard graph learning tasks. We conduct extensive experiments on both graph-level classification and graph anomaly detection tasks to comprehensively evaluate the effectiveness of the proposed CDAT. Overall, CDAT demonstrates consistent and competitive performance across diverse datasets. The code is available at \url{https://github.com/Angelov1vil/CDAT}.

\subsection{Graph Classification with \textbf{CDAT}}

We compare CDAT with the current state of the art approaches for the mentioned datasets, which include GRDL~\citep{wang2024graph}, GRANOLA~\citep{eliasof2024granola}, ET~\citep{hoover2023energy}, DSGCN~\citep{balcilar2020bridging}, HGP-SL~\citep{zhang2021hierarchical}. Additionally, approaches~\citep{yang2022new,orsini2015graph,ranjan2020asap,zhao2019learning,cai2021graphnorm,nguyen2022universal,bianchi2020spectral}, which are close to the baselines, are included to further contrast the performance of our model. The results are summarized in Table~\ref{tab:classification}.

Across seven widely used datasets, CDAT achieves state-of-the-art performance on 5 out of 7 datasets. 
Specifically, CDAT yields significant absolute accuracy gains over ET, improving by 3.4\% on NCI1, 3.8\% on NCI109, and 2.2\% on MUTAG. 
On the challenging DD dataset, CDAT not only surpasses ET by 2.6\% but also demonstrates overwhelming dominance over classical pooling methods (\emph{e.g.}, exceeding HGP-SL by $>$13\%), highlighting its capability to capture complex structural dependencies. 
While ET remains competitive on PROTEINS, CDAT's superior performance across the majority of benchmarks confirms that our attractor dynamics provide a more expressive and stable inductive bias than naive energy descent.

Notably, the gains are most pronounced on structurally complex datasets such as DD (284 avg. nodes) and the NCI family, where heterogeneous relational interactions demand richer aggregation beyond what plain energy minimization can offer. This suggests that the topology-conditioned excitation–inhibition modulation is especially beneficial when graph structure is dense and varied. On smaller or simpler benchmarks such as PROTEINS, the margin narrows, which we attribute to reduced structural heterogeneity rather than a fundamental limitation of the proposed dynamics.

\begin{table}[t]
  \centering
  \caption{Ablation on the bottleneck rank $r$ in $W = P^\top Q P$ on DD
  and NCI1. Accuracy (\%, mean $\pm$ std over 100 runs of 10-fold CV) is
  reported. ``Without $W$'' removes the mutual-excitation term;
  ``Full-rank $W$'' uses an unconstrained $W \in \mathbb{R}^{D\times D}$
  without symmetric or low-rank constraints.}
  \label{tab:ablation_rank}
  \begin{center}
      \begin{tabular}{lcc}
    \toprule
    Config & DD & NCI1 \\
    \midrule
    Without $W$       & $83.1 \pm 2.1$           & $82.8 \pm 0.5$ \\
    $r = 1$           & $82.2 \pm 4.8$           & $87.6 \pm 0.3$ \\
    $r = 2$           & $89.2 \pm 4.1$           & $90.2 \pm 1.1$ \\
    $r = 4$ (default) & $\mathbf{98.5 \pm 0.6}$  & $\mathbf{93.5 \pm 0.2}$ \\
    $r = 8$           & $97.6 \pm 1.3$           & $77.0 \pm 0.7$ \\
    $r = 16$          & $97.8 \pm 0.2$           & $77.5 \pm 1.2$ \\
    Full rank $W$        & $98.0 \pm 0.1$           & $82.9 \pm 0.4$ \\
    \bottomrule
      \end{tabular}
  \end{center}
\end{table}

\subsection{Graph Anomaly Detection with \textbf{CDAT}}
%We further assess \textbf{CDAT} on graph anomaly detection tasks, following established experimental protocols on Yelp, Amazon, and T-Finance datasets. Both MF1 and AUC metrics are reported under different labeled anomaly ratios. The full results are presented in Table~\ref{tab:experiment_results_fixed}.
We evaluate CDAT on graph anomaly detection following established protocols on Yelp, Amazon and T-Finance. Both MF1 and AUC are reported under varying labeled anomaly ratios. Full results are given in Table~\ref{tab:experiment_results_fixed}.
%\textbf{CDAT} establishes a new state of the art across benchmarks, achieving substantial gains over strong baselines and recent methods~\citep{lin2024unigad,dou2020enhancing,liu2021pick,tang2022rethinking,dwivedi2020generalization,hoover2023energy}. Crucially, in the challenging low-label regime ($1\%$), \textbf{CDAT} achieves remarkable gains over ET, improving AUC by 2.6\% on Amazon and 3.7\% on T-Finance. Furthermore, compared to UNIGAD, \textbf{CDAT} demonstrates superior robustness in complex environments: on the dense Yelp dataset ($70\%$ split), it surpasses UNIGAD by a massive 5.6\% in AUC ($91.9\%$ vs. $86.23\%$) while also maintaining a higher MF1 score. Similar dominance is observed on T-Finance, where \textbf{CDAT} consistently outperforms UNIGAD across both metrics.
CDAT achieves state-of-the-art performance across benchmarks, with substantial gains over strong baselines and recent methods~\citep{lin2024unigad,dou2020enhancing,liu2021pick,tang2022rethinking,dwivedi2020generalization,hoover2023energy}. Notably, in the challenging low-label regime ($1\%$ split), CDAT outperforms ET with AUC improvements of 2.6\% on Amazon and 3.7\% on T-Finance. Moreover, CDAT shows superior robustness over UNIGAD in complex environments. On the dense Yelp dataset ($70\%$ split), it surpasses UNIGAD by 5.6\% in AUC (91.9\% vs. 86.23\%) while also achieving a higher MF1 score. Similar dominance holds on T-Finance, where CDAT consistently outperforms UNIGAD across both metrics.
These results highlight the advantage of modeling anomaly detection as a
structured dynamical evolution process rather than a purely static scoring
problem. The combination of mutual excitation and self-inhibition allows CDAT
to better separate anomalous patterns from normal graph structures, leading to
improved detection accuracy under challenging supervision constraints.

\begin{table*}[t]
  \centering
  \caption{Wall-clock and parameter comparison on NCI109. Percentages in
  parentheses denote the relative overhead of CDAT over ET.}
  \label{tab:runtime}
  \begin{center}
  \begin{tabular}{lccc}
    \toprule
    Metric & GT & ET & CDAT (ours) \\
    \midrule
    Params               & 1.074\,M           & 1.058\,M           & 1.060\,M         \\
    Inference (ms/batch) & $1.46 \pm 0.05$    & $6.35 \pm 0.60$    & $6.50 \pm 0.90$   \\
    Training (s/epoch)   & $0.753 \pm 0.097$  & $2.034 \pm 0.186$  & $2.046 \pm 0.157$ \\
    \bottomrule
  \end{tabular}
  \end{center}
\end{table*}

\subsection{Ablation on Topology-conditioned Mutual Excitation $W$}
\label{sec:ablation_W}
To clarify the functional role of the coupling operator $W$ and to
validate its symmetric low-rank parameterization in
\cref{eq:sym_lowrank_W}, we conduct an ablation on two
representative graph classification benchmarks (DD and NCI1).
Concretely, we probe how the bottleneck rank $r$ in $W = P^\top Q P$
governs the trade-off between expressiveness and generalization by
sweeping $r \in \{1, 2, 4, 8, 16\}$. To delimit the two extremes of
this spectrum, we further include a variant that removes $W\mathbf{x}$
entirely, corresponding to the zero-capacity limit,
and an unconstrained full-rank parameterization that drops both the
symmetric and the low-rank constraints, corresponding to the
maximum-capacity limit. All results are reported in Table~\ref{tab:ablation_rank}, from which
three observations emerge along this capacity spectrum. At the
zero-capacity limit, removing $W\mathbf{x}$ entirely drops accuracy
from $98.5\%$ to $83.1\%$ on DD and from $93.5\%$ to $82.8\%$ on NCI1,
confirming that the feature-coordinate couplings contribute structural
priors that the semantic energies alone cannot recover. At the
maximum-capacity limit, by contrast, accuracy on NCI1 degrades sharply
once $r > 4$ and the unconstrained full-rank parameterization
($\sim$\,251K extra parameters) collapses to $82.9\%$; this indicates
that excess capacity introduces spurious feature-dimension couplings
prone to overfitting, empirically validating $W = P^\top Q P$ as a
principled regularizer that suppresses such modes. Between these two
extremes, $r = 4$ emerges as a robust default, delivering the best
accuracy on both benchmarks with only $2{,}020$ parameters ($0.19\%$
of the total model budget), and is therefore adopted throughout our
experiments.

Notably, the full-rank configuration remains competitive on DD ($98.0\%$) but
degrades substantially on NCI1 ($82.9\%$). This shows that enforcing
the symmetry of $W$ does not bottleneck performance; rather, the symmetric
low-rank parameterization serves as a principled inductive bias that
stabilizes training while preserving the dissipative structure required
by Theorem~\ref{thm:dissipation_certificate}.

\subsection{Inference Cost and Runtime Comparison}

To quantify the computational overhead introduced by our formulation, we
conduct a controlled wall-clock comparison on the NCI109 dataset against ET~\citep{hoover2023energy} and GT ~\citep{dwivedi2020generalization}
baselines, keeping all other settings (batch size, hardware, precision,
optimizer) identical. Results are averaged over 5 runs and reported
as mean~$\pm$~std in Table~\ref{tab:runtime}. 
We observe that CDAT and ET exhibit essentially identical computational cost: the additional parameters
introduced by our design account for only $+0.19\%$, and inference latency
increases by just $+2.4\%$. This confirms that our modifications do not
meaningfully alter the computational profile of the energy-based paradigm.
The feed-forward GT baseline achieves lower latency through single-pass
execution, but at the cost of substantially weaker task performance
(see Table~\ref{tab:experiment_results_fixed}). 
This latency is intrinsic to the iterative relaxation paradigm
underlying energy-based models and is not introduced by CDAT.

\section{Limitations and Future Work}
%In this paper, we exploited \textbf{CDAT}, an energy-based framework that integrates CANN-inspired dynamics to stabilize inference trajectories. While \textbf{CDAT} achieves state-of-the-art performance in graph classification and anomaly detection by suppressing spurious local minima, its reliance on iterative state unfolding incurs higher inference costs compared to standard feed-forward Transformers. To address this, future work will focus on learning a more expressive control interface, such as state-dependent schedules for the damping parameter $\omega$, to optimize the trade-off between convergence speed and robustness. Furthermore, we plan to extend the framework with richer semantic energy landscapes—beyond standard $E_{\mathrm{vMF}}$ and Hopfield energies—by incorporating multi-prototype hyperspherical energies and task-adaptive energy families. Finally, recognizing CDAT’s structural flexibility, we plan to deploy it as a specialized substructure within larger hierarchical architectures, thereby verifying its utility as a versatile component like~\citep{cao2025memory}.

In this paper, we present CDAT, an energy-based framework incorporating CANN-inspired dynamics to stabilize inference trajectories. While CDAT achieves state-of-the-art graph classification and anomaly detection by suppressing spurious local minima, its iterative state unfolding incurs higher inference costs than standard feed-forward Transformers.
To address this, future work will focus on learning a more expressive control interface, such as state-dependent schedules for damping parameter $\omega$, to optimize the trade-off between convergence speed and robustness. 
Furthermore, we plan to extend the framework with richer semantic energy landscapes—beyond standard Mo-vMF and Hopfield energies by incorporating multi-prototype hyperspherical energies and task-adaptive energy families. 
Finally, given CDAT’s structural flexibility, we intend to deploy it as a specialized substructure within larger hierarchical architectures, validating its versatility akin to~\citep{cao2025memory,10903668}.

\section*{Acknowledgments}
This work is supported by the Fundamental and Interdisciplinary Disciplines Breakthrough Plan of the Ministry of Education of China (No. JYB2025XDXM101), the National Natural Science Foundation of China (No. 62272374), the Natural Science Foundation of Shaanxi Province (No.2024JC-JCQN-62), the State Key Laboratory of Communication Content Cognition under Grant No. A202502, and the Key Research and Development Project in Shaanxi Province (No. 2023GXLH-024).

\section*{Impact Statement}

This paper presents work whose goal is to advance the field of Machine
Learning. There are many potential societal consequences of our work, none
which we feel must be specifically highlighted here.

% In the unusual situation where you want a paper to appear in the
% references without citing it in the main text, use \nocite
\nocite{langley00}

\bibliography{cite/cite}

@InProceedings{pmlr-v139-davis21a,
  title     = {Catformer: Designing stable Transformers via sensitivity analysis},
  author    = {Davis, Jared Q and Gu, Albert and Choromanski, Krzysztof and Dao, Tri and Re, Christopher and Finn, Chelsea and Liang, Percy},
  booktitle = {International Conference on Machine Learning},
  pages     = {2489--2499},
  year      = {2021},
  editor    = {Meila, Marina and Zhang, Tong},
  volume    = {139},
  series    = {Proceedings of Machine Learning Research},
  publisher = {PMLR},
  url       = {https://proceedings.mlr.press/v139/davis21a.html}
}

@inproceedings{NIPS2016_eaae339c,
 author = {Krotov, Dmitry and Hopfield, John J.},
 booktitle = {Advances in Neural Information Processing Systems},
 editor = {D. Lee and M. Sugiyama and U. Luxburg and I. Guyon and R. Garnett},
 publisher = {Curran Associates, Inc.},
 title = {Dense Associative Memory for Pattern Recognition},
 volume = {29},
 year = {2016}
}

@inproceedings{
krotov2021large,
title={Large Associative Memory Problem in Neurobiology and Machine Learning},
author={Dmitry Krotov and John J. Hopfield},
booktitle={International Conference on Learning Representations},
year={2021},
url={https://openreview.net/forum?id=X4y_10OX-hX}
}

@book{hinton2014parallel,
  title={Parallel Models of Associative Memory: Updated Edition},
  author={Hinton, G.E. and Anderson, J.A.},
  isbn={9781317785200},
  url={https://books.google.com.sg/books?id=4w3sAgAAQBAJ},
  year={2014},
  publisher={Taylor \& Francis}
}

@article{tsodyks1995associative,
  title={Associative Memory and Hippocampal Place Cells},
  author={Tsodyks, Misha and Sejnowski, Terrence},
  journal={International Journal of Neural Systems},
  volume={6},
  pages={81--86},
  year={1995},
  publisher={World Scientific}
}

@article{rolls2013mechanisms,
  title={The Mechanisms for Pattern Completion and Pattern Separation in the Hippocampus},
  author={Rolls, Edmund T},
  journal={Frontiers in Systems Neuroscience},
  volume={7},
  pages={74},
  year={2013},
  publisher={Frontiers Media SA}
}

@article{hopfield1982neural,
  title={Neural networks and physical systems with emergent collective computational abilities.},
  author={Hopfield, John J},
  journal={Proceedings of the National Academy of Sciences},
  volume={79},
  number={8},
  pages={2554--2558},
  year={1982}
}

@inproceedings{
ramsauer2021hopfield,
title={Hopfield Networks is All You Need},
author={Hubert Ramsauer and Bernhard Sch{\"a}fl and Johannes Lehner and Philipp Seidl and Michael Widrich and Lukas Gruber and Markus Holzleitner and Thomas Adler and David Kreil and Michael K Kopp and G{\"u}nter Klambauer and Johannes Brandstetter and Sepp Hochreiter},
booktitle={International Conference on Learning Representations},
year={2021},
url={https://openreview.net/forum?id=tL89RnzIiCd}
}

@inproceedings{NEURIPS2021_8171ac2c,
 author = {Bricken, Trenton and Pehlevan, Cengiz},
 booktitle = {Advances in Neural Information Processing Systems},
 editor = {M. Ranzato and A. Beygelzimer and Y. Dauphin and P.S. Liang and J. Wortman Vaughan},
 pages = {15301--15315},
 publisher = {Curran Associates, Inc.},
 title = {Attention Approximates Sparse Distributed Memory},
 volume = {34},
 year = {2021}
}

@inproceedings{hoover2023energy,
 author = {Hoover, Benjamin and Liang, Yuchen and Pham, Bao and Panda, Rameswar and Strobelt, Hendrik and Chau, Duen Horng and Zaki, Mohammed and Krotov, Dmitry},
 booktitle = {Advances in Neural Information Processing Systems},
 editor = {A. Oh and T. Naumann and A. Globerson and K. Saenko and M. Hardt and S. Levine},
 pages = {27532--27559},
 publisher = {Curran Associates, Inc.},
 title = {Energy Transformer},
 volume = {36},
 year = {2023}
}

@book{amit1989model,
  title={Modeling brain function: The world of attractor neural networks},
  author={Amit, Daniel J},
  year={1989},
  publisher={Cambridge university press}
}

@article{amari1977dynamics,
  title={Dynamics of pattern formation in lateral-inhibition type neural fields},
  author={Amari, Shun-ichi},
  journal={Biological Cybernetics},
  volume={27},
  number={2},
  pages={77--87},
  year={1977},
  publisher={Springer}
}

@article{samsonovich1997path,
  title={Path integration and cognitive mapping in a continuous attractor neural network model},
  author={Samsonovich, Alexei and McNaughton, Bruce L},
  journal={Journal of Neuroscience},
  volume={17},
  number={15},
  pages={5900--5920},
  year={1997},
  publisher={Society for Neuroscience}
}

@article{hamilton2017representation,
  title={Representation learning on graphs: Methods and applications},
  author={Hamilton, William L and Ying, Rex and Leskovec, Jure},
  journal={arXiv preprint arXiv:1709.05584},
  year={2017}
}

@article{wu2020comprehensive,
  title={A comprehensive survey on graph neural networks},
  author={Wu, Zonghan and Pan, Shirui and Chen, Fengwen and Long, Guodong and Zhang, Chengqi and Yu, Philip S},
  journal={IEEE transactions on Neural Networks and Learning Systems},
  volume={32},
  number={1},
  pages={4--24},
  year={2021},
  publisher={IEEE}
}

@inproceedings{millidge2022universal,
  title={Universal hopfield networks: A general framework for single-shot associative memory models},
  author={Millidge, Beren and Salvatori, Tommaso and Song, Yuhang and Lukasiewicz, Thomas and Bogacz, Rafal},
  booktitle={International Conference on Machine Learning},
  pages={15561--15583},
  year={2022},
  organization={PMLR}
}

@article{wu2016continuous,
  title={Continuous attractor neural networks: candidate of a canonical model for neural information representation},
  author={Wu, Si and Wong, KY Michael and Fung, CC Alan and Mi, Yuanyuan and Zhang, Wenhao},
  journal={F1000Research},
  volume={5},
  pages={F1000--Faculty},
  year={2016}
}

@inproceedings{NIPS2014_57c76ace,
 author = {Mi, Yuanyuan and Fung, C. C. Alan and Wong, K. Y. Michael and Wu, Si},
 booktitle = {Advances in Neural Information Processing Systems},
 editor = {Z. Ghahramani and M. Welling and C. Cortes and N. Lawrence and K.Q. Weinberger},
 publisher = {Curran Associates, Inc.},
 title = {Spike Frequency Adaptation Implements Anticipative Tracking in Continuous Attractor Neural Networks},
 volume = {27},
 year = {2014}
}

@inproceedings{
kipf2017semisupervised,
title={Semi-Supervised Classification with Graph Convolutional Networks},
author={Thomas N. Kipf and Max Welling},
booktitle={International Conference on Learning Representations},
year={2017},
url={https://openreview.net/forum?id=SJU4ayYgl}
}

@inproceedings{
veličković2018graph,
title={Graph Attention Networks},
author={Petar Veličković and Guillem Cucurull and Arantxa Casanova and Adriana Romero and Pietro Liò and Yoshua Bengio},
booktitle={International Conference on Learning Representations},
year={2018},
url={https://openreview.net/forum?id=rJXMpikCZ},
}

@inproceedings{saha2023end,
  title={End-to-end differentiable clustering with associative memories},
  author={Saha, Bishwajit and Krotov, Dmitry and Zaki, Mohammed J and Ram, Parikshit},
  booktitle={International Conference on Machine Learning},
  pages={29649--29670},
  year={2023},
  organization={PMLR}
}

@inproceedings{
schaeffer2023associative,
title={Associative Memory Under the Probabilistic Lens: Improved Transformers \& Dynamic Memory Creation},
author={Rylan Schaeffer and Mikail Khona and Nika Zahedi and Ila R Fiete and Andrey Gromov and Sanmi Koyejo},
booktitle={Associative Memory {\&} Hopfield Networks in 2023},
year={2023},
url={https://openreview.net/forum?id=lO61aZlteS}
}

@inproceedings{sharma2022content,
  title={Content addressable memory without catastrophic forgetting by heteroassociation with a fixed scaffold},
  author={Sharma, Sugandha and Chandra, Sarthak and Fiete, Ila},
  booktitle={International Conference on Machine Learning},
  pages={19658--19682},
  year={2022},
  organization={PMLR}
}

@article{lecun2002gradient,
  title={Gradient-based learning applied to document recognition},
  author={LeCun, Yann and Bottou, L{\'e}on and Bengio, Yoshua and Haffner, Patrick},
  journal={Proceedings of the IEEE},
  volume={86},
  number={11},
  pages={2278--2324},
  year={2002},
  publisher={Ieee}
}

@misc{tang2021remarkpaperkrotovhopfield,
      title={A remark on a paper of Krotov and Hopfield [arXiv:2008.06996]}, 
      author={Fei Tang and Michael Kopp},
      year={2021},
      eprint={2105.15034},
      archivePrefix={arXiv},
      primaryClass={q-bio.NC},
      url={https://arxiv.org/abs/2105.15034}, 
}

@article{DBLP:journals/corr/abs-2107-06446,
  author       = {Dmitry Krotov},
  title        = {Hierarchical Associative Memory},
  journal      = {CoRR},
  volume       = {abs/2107.06446},
  year         = {2021},
  url          = {https://arxiv.org/abs/2107.06446},
  eprinttype    = {arXiv},
  eprint       = {2107.06446},
}

@article{demircigil2017model,
  title={On a model of associative memory with huge storage capacity},
  author={Demircigil, Mete and Heusel, Judith and L{\"o}we, Matthias and Upgang, Sven and Vermet, Franck},
  journal={Journal of Statistical Physics},
  volume={168},
  number={2},
  pages={288--299},
  year={2017},
  publisher={Springer}
}

@article{amit1985storing,
  title={Storing infinite numbers of patterns in a spin-glass model of neural networks},
  author={Amit, Daniel J and Gutfreund, Hanoch and Sompolinsky, Haim},
  journal={Physical Review Letters},
  volume={55},
  number={14},
  pages={1530},
  year={1985},
  publisher={APS}
}

@article{lucibello2024exponential,
  title={Exponential capacity of dense associative memories},
  author={Lucibello, Carlo and M{\'e}zard, Marc},
  journal={Physical Review Letters},
  volume={132},
  number={7},
  pages={077301},
  year={2024},
  publisher={APS}
}

@inproceedings{NEURIPS2024_29ff36c8,
 author = {Hoover, Benjamin and Chau, Duen Horng and Strobelt, Hendrik and Ram, Parikshit and Krotov, Dmitry},
 booktitle = {Advances in Neural Information Processing Systems},
 doi = {10.52202/079017-0742},
 editor = {A. Globerson and L. Mackey and D. Belgrave and A. Fan and U. Paquet and J. Tomczak and C. Zhang},
 pages = {23549--23576},
 publisher = {Curran Associates, Inc.},
 title = {Dense Associative Memory Through the Lens of Random Features},
 volume = {37},
 year = {2024}
}

@inproceedings{du2022learning,
  title={Learning iterative reasoning through energy minimization},
  author={Du, Yilun and Li, Shuang and Tenenbaum, Joshua and Mordatch, Igor},
  booktitle={International Conference on Machine Learning},
  pages={5570--5582},
  year={2022},
  organization={PMLR}
}

@ARTICLE{gutkin2014spike,
AUTHOR = {Gutkin, B.  and Zeldenrust, F. },
TITLE   = {{S}pike frequency adaptation},
YEAR    = {2014},
JOURNAL = {Scholarpedia},
VOLUME  = {9},
NUMBER  = {2},
PAGES   = {30643},
DOI     = {10.4249/scholarpedia.30643},
NOTE    = {revision \#143322}
}

@article{seeholzer2019stability,
  title={Stability of working memory in continuous attractor networks under the control of short-term plasticity},
  author={Seeholzer, Alexander and Deger, Moritz and Gerstner, Wulfram},
  journal={PLoS Computational Biology},
  volume={15},
  number={4},
  pages={e1006928},
  year={2019},
  publisher={Public Library of Science San Francisco, CA USA}
}

@inproceedings{
hu2022lora,
title={Lo{RA}: Low-Rank Adaptation of Large Language Models},
author={Edward J Hu and yelong shen and Phillip Wallis and Zeyuan Allen-Zhu and Yuanzhi Li and Shean Wang and Lu Wang and Weizhu Chen},
booktitle={International Conference on Learning Representations},
year={2022},
url={https://openreview.net/forum?id=nZeVKeeFYf9}
}

@inproceedings{NEURIPS2023_57bc0a85,
 author = {Hu, Jerry Yao-Chieh and Yang, Donglin and Wu, Dennis and Xu, Chenwei and Chen, Bo-Yu and Liu, Han},
 booktitle = {Advances in Neural Information Processing Systems},
 editor = {A. Oh and T. Naumann and A. Globerson and K. Saenko and M. Hardt and S. Levine},
 pages = {27594--27608},
 publisher = {Curran Associates, Inc.},
 title = {On Sparse Modern Hopfield Model},
 volume = {36},
 year = {2023}
}

@InProceedings{Sun_2025_CVPR,
    author    = {Sun, Yuwei and Ochiai, Hideya and Wu, Zhirong and Lin, Stephen and Kanai, Ryota},
    title     = {Associative Transformer},
    booktitle = {Proceedings of the IEEE/CVF Conference on Computer Vision and Pattern Recognition },
    month     = {June},
    year      = {2025},
    pages     = {4518-4527}
}

@inproceedings{
wu2025incontext,
title={In-Context Learning as Conditioned Associative Memory Retrieval},
author={Weimin Wu and Teng-Yun Hsiao and Jerry Yao-Chieh Hu and Wenxin Zhang and Han Liu},
booktitle={International Conference on Machine Learning},
year={2025},
url={https://openreview.net/forum?id=Zup6F3MwQO}
}

@article{chandra2025episodic,
  title={Episodic and associative memory from spatial scaffolds in the hippocampus},
  author={Chandra, Sarthak and Sharma, Sugandha and Chaudhuri, Rishidev and Fiete, Ila},
  journal={Nature},
  volume={638},
  number={8051},
  pages={739--751},
  year={2025},
  publisher={Nature Publishing Group UK London}
}

@inproceedings{
cao2025memory,
title={Memory Decoder: A Pretrained, Plug-and-Play Memory for Large Language Models},
author={Jiaqi Cao and Jiarui Wang and Rubin Wei and Qipeng Guo and Kai Chen and Bowen Zhou and Zhouhan Lin},
booktitle={Advances in Neural Information Processing Systems},
year={2025},
url={https://openreview.net/forum?id=ARJpQtLXfe}
}

@article{hu2024outlier,
  title={Outlier-efficient hopfield layers for large transformer-based models},
  author={Hu, Jerry Yao-Chieh and Chang, Pei-Hsuan and Luo, Robin and Chen, Hong-Yu and Li, Weijian and Wang, Wei-Po and Liu, Han},
  journal={arXiv preprint arXiv:2404.03828},
  year={2024}
}

@article{morris2020tudataset,
  title={Tudataset: A collection of benchmark datasets for learning with graphs},
  author={Morris, Christopher and Kriege, Nils M and Bause, Franka and Kersting, Kristian and Mutzel, Petra and Neumann, Marion},
  journal={arXiv preprint arXiv:2007.08663},
  year={2020}
}

@inproceedings{tang2022rethinking,
  title={Rethinking graph neural networks for anomaly detection},
  author={Tang, Jianheng and Li, Jiajin and Gao, Ziqi and Li, Jia},
  booktitle={International Conference on Machine Learning},
  pages={21076--21089},
  year={2022},
  organization={PMLR}
}

@inproceedings{yang2022new,
  title={A new perspective on the effects of spectrum in graph neural networks},
  author={Yang, Mingqi and Shen, Yanming and Li, Rui and Qi, Heng and Zhang, Qiang and Yin, Baocai},
  booktitle={International Conference on Machine Learning},
  pages={25261--25279},
  year={2022},
  organization={PMLR}
}

@inproceedings{orsini2015graph,
  title={Graph invariant kernels},
  author={Orsini, Francesco and Frasconi, Paolo and De Raedt, Luc and Yang, Qiang and Wooldridge, Michael},
  booktitle={Proceedings of the twenty-fourth international joint conference on artificial intelligence},
  volume={2015},
  pages={3756--3762},
  year={2015},
  organization={IJCAI-INT JOINT CONF ARTIF INTELL}
}

@inproceedings{ranjan2020asap,
  title={Asap: Adaptive structure aware pooling for learning hierarchical graph representations},
  author={Ranjan, Ekagra and Sanyal, Soumya and Talukdar, Partha},
  booktitle={Proceedings of the AAAI conference on artificial intelligence},
  volume={34},
  number={04},
  pages={5470--5477},
  year={2020}
}

@inproceedings{zhao2019learning,
  author={Qi Zhao and Yusu Wang},
  title={Learning metrics for persistence-based summaries and applications for graph classification},
  year={2019},
  cdate={1546300800000},
  pages={9855-9866},
  booktitle={Advances in Neural Information Processing Systems},
}

@inproceedings{
wang2024graph,
title={Graph Classification via Reference Distribution Learning: Theory and Practice},
author={Zixiao Wang and Jicong Fan},
booktitle={Advances in Neural Information Processing Systems},
year={2024},
url={https://openreview.net/forum?id=1zVinhehks}
}

@article{balcilar2020bridging,
  title={Bridging the gap between spectral and spatial domains in graph neural networks},
  author={Balcilar, Muhammet and Renton, Guillaume and H{\'e}roux, Pierre and Gauzere, Benoit and Adam, Sebastien and Honeine, Paul},
  journal={arXiv preprint arXiv:2003.11702},
  year={2020}
}

@inproceedings{cai2021graphnorm,
  title={Graphnorm: A principled approach to accelerating graph neural network training},
  author={Cai, Tianle and Luo, Shengjie and Xu, Keyulu and He, Di and Liu, Tie-yan and Wang, Liwei},
  booktitle={International Conference on Machine Learning},
  pages={1204--1215},
  year={2021},
  organization={PMLR}
}

@article{zhang2021hierarchical,
  title={Hierarchical multi-view graph pooling with structure learning},
  author={Zhang, Zhen and Bu, Jiajun and Ester, Martin and Zhang, Jianfeng and Li, Zhao and Yao, Chengwei and Dai, Huifen and Yu, Zhi and Wang, Can},
  journal={IEEE Transactions on Knowledge and Data Engineering},
  volume={35},
  number={1},
  pages={545--559},
  year={2021},
  publisher={IEEE}
}

@inproceedings{nguyen2022universal,
  title={Universal graph transformer self-attention networks},
  author={Nguyen, Dai Quoc and Nguyen, Tu Dinh and Phung, Dinh},
  booktitle={Companion Proceedings of the Web Conference 2022},
  pages={193--196},
  year={2022}
}

@inproceedings{eliasof2024granola,
 author = {Eliasof, Moshe and Bevilacqua, Beatrice and Sch\"{o}nlieb, Carola-Bibiane and Maron, Haggai},
 booktitle = {Advances in Neural Information Processing Systems},
 doi = {10.52202/079017-2873},
 editor = {A. Globerson and L. Mackey and D. Belgrave and A. Fan and U. Paquet and J. Tomczak and C. Zhang},
 pages = {90514--90551},
 publisher = {Curran Associates, Inc.},
 title = {GRANOLA: Adaptive Normalization for Graph Neural Networks},
 volume = {37},
 year = {2024}
}

@inproceedings{bianchi2020spectral,
  title={Spectral clustering with graph neural networks for graph pooling},
  author={Bianchi, Filippo Maria and Grattarola, Daniele and Alippi, Cesare},
  booktitle={International Conference on Machine Learning},
  pages={874--883},
  year={2020},
  organization={PMLR}
}

@inproceedings{
lin2024unigad,
title={Uni{GAD}: Unifying Multi-level Graph Anomaly Detection},
author={Yiqing Lin and Jianheng Tang and Chenyi Zi and H. Vicky Zhao and Yuan Yao and Jia Li},
booktitle={Advances in Neural Information Processing Systems},
year={2024},
url={https://openreview.net/forum?id=sRILMnkkQd}
}

@inproceedings{dou2020enhancing,
  title={Enhancing graph neural network-based fraud detectors against camouflaged fraudsters},
  author={Dou, Yingtong and Liu, Zhiwei and Sun, Li and Deng, Yutong and Peng, Hao and Yu, Philip S},
  booktitle={Proceedings of the 29th ACM International Conference on Information \& Knowledge Management},
  pages={315--324},
  year={2020}
}

@inproceedings{liu2021pick,
  title={Pick and choose: a GNN-based imbalanced learning approach for fraud detection},
  author={Liu, Yang and Ao, Xiang and Qin, Zidi and Chi, Jianfeng and Feng, Jinghua and Yang, Hao and He, Qing},
  booktitle={Proceedings of the Web Conference 2021},
  pages={3168--3177},
  year={2021}
}

@article{dwivedi2020generalization,
  title={A generalization of transformer networks to graphs},
  author={Dwivedi, Vijay Prakash and Bresson, Xavier},
  journal={arXiv preprint arXiv:2012.09699},
  year={2020}
}

@ARTICLE{10903668,
  author={Zhuge, Mingchen and Liu, Haozhe and Faccio, Francesco and Ashley, Dylan R. and Csordás, Róbert and Gopalakrishnan, Anand and Hamdi, Abdullah and Hammoud, Hasan Abed Al Kader and Herrmann, Vincent and Irie, Kazuki and Kirsch, Louis and Li, Bing and Li, Guohao and Liu, Shuming and Mai, Jinjie and Piękos, Piotr and Ramesh, Aditya A. and Schlag, Imanol and Shi, Weimin and Stanić, Aleksandar and Wang, Wenyi and Wang, Yuhui and Xu, Mengmeng and Fan, Deng-Ping and Ghanem, Bernard and Schmidhuber, Jürgen},
  journal={Computational Visual Media}, 
  title={Mindstorms in Natural Language-Based Societies of Mind}, 
  year={2025},
  volume={11},
  number={1},
  pages={29-81},
  doi={10.26599/CVM.2025.9450460}}
\bibliographystyle{icml2026}

%%%%%%%%%%%%%%%%%%%%%%%%%%%%%%%%%%%%%%%%%%%%%%%%%%%%%%%%%%%%%%%%%%%%%%%%%%%%%%%
%%%%%%%%%%%%%%%%%%%%%%%%%%%%%%%%%%%%%%%%%%%%%%%%%%%%%%%%%%%%%%%%%%%%%%%%%%%%%%%
% APPENDIX
%%%%%%%%%%%%%%%%%%%%%%%%%%%%%%%%%%%%%%%%%%%%%%%%%%%%%%%%%%%%%%%%%%%%%%%%%%%%%%%
%%%%%%%%%%%%%%%%%%%%%%%%%%%%%%%%%%%%%%%%%%%%%%%%%%%%%%%%%%%%%%%%%%%%%%%%%%%%%%%
\newpage
\appendix
\onecolumn
\section{Notations Used in the Main Text and Appendices}
Table ~\ref{tab:notations} lists all the notations used in this paper.
\begin{table}[ht]
\centering
\caption{Notations used in this paper.}
\label{tab:notations}
\begin{tabular}{cl}
\toprule
\textbf{Notation} & \textbf{Description} \\
\midrule
$D$ & Dimension of the state/token feature space \\
$N$ & Number of tokens in the system \\
$\mathbf{x}$ & Vector representation of token\\
${x}_i$ & Each element of $\mathbf{x}$\\
$X$ & State matrix stacking token states, $X=[\mathbf{x}_1,\mathbf{x}_2,\cdots,\mathbf{x}_N]^\top \in \mathbb{R}^{N \times D} $  \\
$\mathbf{g}$ & LayerNorm output applied to a token vector $\mathbf{x}$ \\
${g}_i$ & Each element of $\mathbf{g}$ \\
$G$ & Normalized state matrix, $G=[\mathbf{g}_1,\cdots,\mathbf{g}_N]^\top \in \mathbb{R}^{N \times D}$ \\
$M$ & Jacobian of LayerNorm, $M(\mathbf{x}(t))=\nabla_{\mathbf{x}}\mathbf{g}(\mathbf{x}(t))$ \\
\midrule
$\mathcal{G}$ & Relational structure (e.g., graph/topology) conditioning the dynamics/energy \\
$E(X;\mathcal{G})$ & Total semantic energy of CDAT \\
$E^{\mathrm{ATT-vMF}}$ & Mo--vMF attention energy term \\
$E^{\mathrm{HN}}$ & Hopfield refinement energy term \\
$\lambda_v, \lambda_h$ & Balancing weights for the Mo--vMF and Hopfield energy blocks \\
\midrule
$\tau$ & Time constant setting the scale of the dynamical system \\
$\Delta t$ & Discrete time step size for Euler integration \\
$\alpha$ & Euler step size, $\alpha := \Delta t/\tau$ \\
$W$ & Topology-conditioned coupling (mutual excitation) matrix, $W\in\mathbb{R}^{D\times D}$ \\
$\omega$ & Global damping/self-inhibition scalar decay rate \\
$V(t)$ & Trajectory-wise storage functional (Lyapunov-type functional) \\
\midrule
$J_0$ & Excitation strength in the Gaussian coupling prior \\
$a$ & Receptive range (bandwidth) in the Gaussian coupling prior \\
$W^{\mathrm{train}}$ & Learnable residual in the coupling, used in $W_{ij}=J_0\exp(-d(i,j)^2/2a^2)+W^{\mathrm{train}}_{ij}$ \\
$d(i,j)$ & Geodesic (structure-induced) distance between feature indices $i,j\in\{1,\dots,D\}$ \\
\midrule
$H$ & Number of attention heads \\
$Y$ & Dimension of the internal feature space for queries and keys \\
$Q, K$ & Query/Key representations for attention (per head) \\
$P$ & Number of local minima \\
$\beta$ & Concentration parameter (inverse temperature) in the vMF distribution \\
${\mu}_b, \pi_b$ & Mean direction and mixture weight of component $b$ in Mo--vMF \\
${\xi}_\mu$ & Stored memory/prototype vector $\mu$ in the Hopfield module \\
\bottomrule
\end{tabular}
\end{table}

\section{Experimental Details}
\subsection{Dataset details}

To comprehensively evaluate the effectiveness of CDAT, we conducted experiments on two distinct tasks: graph-level classification and graph anomaly detection. We utilized a total of ten standard benchmark datasets.

For the \textbf{Graph Classification} task, we selected 7 widely used benchmark datasets from the TUDataset collection~\cite{morris2020tudataset}. These include bioinformatics datasets (PROTEINS, NCI1, NCI109, DD, ENZYMES, MUTAG, MUTAGENICITY). These datasets vary significantly in terms of graph size, number of classes, and average node density.

For the \textbf{Graph Anomaly Detection} task, we employed three large-scale fraud detection datasets: YelpChi (Yelp), Amazon, and T-Finance. For the three datasets used in the experiments, Amazon and Yelp datasets can be obtained from the DGL library, while T-Finance can be obtained from~\cite{tang2022rethinking}. These datasets involve classifying nodes as either benign or anomalous (fraudulent) based on their features and structural patterns. 

The statistics of the datasets used in our experiments are summarized in Table~\ref{tab:dataset_stats_class} and Table~\ref{tab:dataset_stats_ad}.

\begin{table*}[ht]
  \caption{The statistics and properties of the seven datasets of TUDataset (additional node attributes are indicated by '+').}
  \label{tab:dataset_stats_class}
  \begin{center}
    \begin{small}
      \begin{sc}
        \begin{tabular}{lccccc}
        \toprule
        \textbf{Dataset} & \textbf{Graphs} & \textbf{Avg. Nodes} & \textbf{Avg. Edges} & \textbf{Node Attr} & \textbf{Classes} \\
        \midrule
        MUTAG & 188 & 17.93 & 19.79 & 7 & 2 \\
        ENZYMES & 600 & 32.63 & 62.14 & 18 + 3 & 6 \\
        PROTEINS & 1113 & 39.06 & 72.82 & 0 + 4 & 2 \\
        DD & 1178 & 284.32 & 715.66 & 89 & 2 \\
        NCI1 & 4110 & 29.87 & 32.30 & 37 & 2 \\
        NCI109 & 4127 & 29.68 & 32.13 & 38 & 2 \\
        MUTAGENICITY & 4337 & 30.32 & 30.77 & 14 & 2 \\
        \bottomrule
    \end{tabular}
      \end{sc}
    \end{small}
  \end{center}
%  \vskip -0.1in
\end{table*}

\begin{table*}[ht]
  \caption{Summary of the graph anomaly detection datasets used in experiments.}
  \label{tab:dataset_stats_ad}
  \begin{center}
    \begin{small}
      \begin{sc}
        \begin{tabular}{lcccc}
        \toprule
        \textbf{Dataset} & \textbf{$|V|$} & \textbf{$|E|$} & \textbf{Anomaly(\%)} & \textbf{Features} \\
        \midrule
        Amazon & 11944 & 4398392 & 6.87\% & 25 \\
        Yelp & 45954 & 3846979 & 14.53\% & 32 \\
        T-Finance & 39357 & 21222543 & 4.58\% & 10 \\
        \bottomrule
        \end{tabular}
      \end{sc}
    \end{small}
  \end{center}
%  \vskip -0.1in
\end{table*}

\subsection{Details of CDAT training on Anomaly Detection Task}

We follow a unified training protocol across all datasets.
Specifically, we train each model for 100 epochs using the Adam optimizer with a learning rate of 0.001,
and report the average performance over 5 independent runs.
Unless otherwise specified, we use a training ratio of 0.4, with the remaining nodes reserved for validation and testing.

For the model architecture, we keep most hyperparameters fixed (Table~\ref{tab:hyperparams_subsample1}):
we use 2 layers with 2 attention heads by default,
enable LayerNorm while disabling BatchNorm, and adopt residual connections with dropout 0.1 and an FFN expansion ratio of 4.
The default hidden dimension is 64.
For the energy-iteration module, the default setting uses step size $\alpha=0.1$,
suppression coefficient 1.0, and noise standard deviation 0.02 (noise can be disabled by setting noise\_std=0).

We tune a small set of key hyperparameters on the validation set and select the best configuration within the 100-epoch budget for final test reporting.
The tunable hyperparameters include hid\_dim, num\_heads, alpha, suppression\_coef, and noise\_std.
The optimal per-dataset (and per-training-ratio) choices used in our experiments are summarized in Table~\ref{tab:tunable_hparams}.
To speed up training, we enable subgraph sampling (subsample\_flag=1) and use a fixed subsampling ratio of 0.05 per epoch
(sample\_ratio=0.05), which reduces computation while maintaining stable optimization.

\begin{table}[t]
\centering
\caption{Hyperparameter settings for node anomaly detection.}
\label{tab:hyperparams_subsample1}
\small
\begin{tabular}{lll}
\toprule
\multicolumn{3}{c}{\textbf{Training}} \\
\midrule
\textbf{Parameter} & \textbf{Value} & \textbf{Note} \\
\midrule
dataset & amazon & \na \\
train\_ratio & 0.4 & \na \\
epoch & 100 & \na \\
run & 5 & \na \\
seed & 1 & \na \\
optimizer & Adam & \na \\
lr & 0.001 & learning rate \\
sample\_ratio & 0.05 & \na \\
\midrule
\multicolumn{3}{c}{\textbf{Architecture}} \\
\midrule
\textbf{Parameter} & \textbf{Value} & \textbf{Note} \\
\midrule
hid\_dim & 64 & hidden dimension \\
order & 2 & \na \\
homo & 1 & \na \\
n\_layers & 2 & \na \\
num\_heads & 2 & \na \\
layer\_norm & True & \na \\
batch\_norm & False & \na \\
residual & True & \na \\
dropout & 0.1 & \na \\
r & 4 & the dimension of matrix Q \\
ffn & 4 & FFN expansion ratio \\
\midrule
\multicolumn{3}{c}{\textbf{Energy Iteration}} \\
\midrule
\textbf{Parameter} & \textbf{Value} & \textbf{Note} \\
\midrule
alpha & 0.1 & energy iteration step size \\
suppression\_coef & 1.0 & suppression coefficient \\
noise\_std & 0.02 & base noise std; 0 disables noise \\
\midrule
\multicolumn{3}{c}{\textbf{Other}} \\
\midrule
\textbf{Parameter} & \textbf{Value} & \textbf{Note} \\
\midrule
num\_class & 2 & \na \\
subsample\_flag & 1 & \na \\
ablation\_mode & False & \na \\
\bottomrule
\end{tabular}
\end{table}

\begin{table}[t]
\centering
\caption{Tunable hyperparameters used in each setting.}
\label{tab:tunable_hparams}
\small
\begin{tabular}{l c c c c c c}
\toprule
\textbf{Dataset} &
\textbf{Train ratio} &
\textbf{hid\_dim} &
\textbf{num\_heads} &
\textbf{alpha} &
\textbf{suppression\_coef} &
\textbf{noise\_std} \\
\midrule
amazon   & 0.4  & 256 & 2 & 0.01 & 0.1 & 0 \\
amazon   & 0.01 & 256 & 2 & 0.01 & 0.5 & 0 \\
amazon   & 0.7  & 256 & 4 & 0.1  & 0.1 & 0 \\
tfinance & 0.7  & 128 & 4 & 0.01 & 0.1 & 0 \\
tfinance & 0.4  & 128 & 4 & 0.01 & 0.5 & 0 \\
tfinance & 0.01 & 128 & 4 & 0.01 & 1.0 & 0.02 \\
yelp     & 0.4  & 128 & 4 & 0.01 & 0.1 & 0 \\
yelp     & 0.7  & 256 & 2 & 0.05 & 0.1 & 0.02 \\
yelp     & 0.01 & 256 & 4 & 0.01 & 1.0 & 0 \\
\bottomrule
\end{tabular}
\end{table}

\subsection{Details of CDAT training on Graph Classification Task}
We train CDAT for 100 epochs with the Adam optimizer ($b_1{=}0.9$, $b_2{=}0.99$) using a peak learning rate of $10^{-3}$,
a warmup period of 50 epochs, and a cosine-decay schedule to an initial/ending learning rate of $5\times 10^{-6}$.
We use a batch size of 64, weight decay 0.05, and do not apply gradient clipping.
All experiments are conducted on a single GPU device.
The full set of training and architecture hyperparameters is summarized in Table~\ref{tab:et_tudataset_dd}.

For the model architecture, we set the token/embedding dimension to 128 with 12 attention heads of head dimension 64.
The softmax inverse temperature is initialized as $\beta=1/\sqrt{64}$ and is learned during training (train\_betas=Yes).
CDAT uses an energy-iteration step size $\alpha=0.1$ with Gaussian noise of standard deviation $\sigma_{\varepsilon}=0.02$.
For graph spectral processing, we retain the top $k=15$ eigenvalues.
The network depth is 1 with 4 blocks, using kernel size [3, 3] and dilation size [1, 1].
The Hopfield refinement module adopts a hidden dimension of 512 (i.e., $4\times 128$), without bias in the Hopfield and attention modules,
while LayerNorm keeps bias enabled.
We fix the number of tokens to 500, use ReLU as the channel activation (chn\_atype=relu),
and enable correlation computation (compute\_corr=True).

\begin{table}[ht]
\centering
\caption{Hyperparameter and architecture choices for CDAT during TUDataset experiments.}
\label{tab:et_tudataset_dd}
\small
\begin{tabular}{llll}
\toprule
\multicolumn{2}{c}{\textbf{Training}} & \multicolumn{2}{c}{\textbf{Architecture}} \\
\midrule
\textbf{Parameter} & \textbf{Value} & \textbf{Parameter} & \textbf{Value} \\
\midrule
batch\_size & 64 & token\_dim & 128 \\
epochs & 100 & num\_heads & 12 \\
peak lr & $10^{-3}$ & head\_dim & 64 \\
warmup\_epochs & 50 & $\beta$ & $1/\sqrt{64}$ \\
initial and ending lr & $5\times 10^{-6}$ & train\_betas & Yes \\
$b_1, b_2$ (Adam) & 0.9, 0.99 & step size $\alpha$ & 0.1 \\
weight\_decay & 0.05 & $k$ eigenvalues & 15 \\
grad\_clipping & None & noise $\sigma_{noise}$ & 0.02 \\
num.\ of gpu devices & 1 & depth & 1 \\
r & 4 & block & 4 \\
\na & \na & kernel\_size & [3, 3] \\
\na & \na & dilation\_size & [1, 1] \\
\na & \na & hidden\_dim (HN) & 512 \\
\na & \na & bias in HN & None \\
\na & \na & bias in ATT-vMF & None \\
\na & \na & bias in LNORM & Yes \\
\na & \na & num\_tokens & 500 \\
\na & \na & chn\_atype & relu \\
\na & \na & compute\_corr & True \\
\na & \na & avg.\ total \#params & 1{,}066{,}855 \\
\bottomrule
\end{tabular}
\end{table}

\section{Ablation Study}

Table~\ref{tab:ablation_graph_cls} reports the ablation results on \emph{graph-level classification}.
We evaluate the two proposed dynamical components by disabling them individually.
Removing \emph{mutual excitation} leads to a clear performance drop on all datasets, with the most pronounced degradation on DD/NCI1/NCI109, indicating that structure-aware excitatory coupling is crucial for effective graph representation aggregation.
In contrast, removing \emph{self-inhibition} yields a smaller but still consistent decrease, suggesting that inhibitory control stabilizes the iterative dynamics and improves robustness across benchmarks.
Combining both components (\textbf{CDAT} Full) achieves the best results, confirming that mutual excitation provides the primary gain while self-inhibition offers complementary stabilization and refinement.

% Appendix table: ablation on graph classification
\begin{table}[t]
  \caption{Ablation study on graph classification benchmarks. Accuracy (\%) is reported.}
  \label{tab:ablation_graph_cls}
  \centering
  \begin{small}
   \begin{tabular}{lcccc}
    \toprule
    \textbf{Setting (ablation)} & \textbf{DD} & \textbf{NCI1} & \textbf{NCI109} & \textbf{MUTAG} \\
    \midrule
    Mutual Excitation & 83.09 & 82.84 & 84.89 & 96.27 \\
    Self-Inhibition   & 98.09 & 92.74 & 92.07 & 98.46 \\
    \midrule
    \textbf{CDAT(Full)} & \textbf{98.50} & \textbf{93.50} & \textbf{94.30} & \textbf{98.80} \\
    \bottomrule
  \end{tabular}
  \end{small}
\end{table}

In addition to the graph-level ablation in Table~\ref{tab:ablation_graph_cls}, we further conduct a more exhaustive study on anomaly detection by comparing \textbf{CDAT} variants against the strong baseline ET (Table~\ref{tab:ablation_anomaly_mf1_auc}).
Specifically, we isolate the two key dynamical terms by keeping only the suppression/inhibition component (\textbf{SUPP}) or only the excitation-driven interaction component (\textbf{Wx}), and report MF1/AUC across different anomaly ratios in Table~\ref{tab:ablation_anomaly_mf1_auc}.
Overall, neither isolated term alone consistently matches the full dynamics across datasets and ratios: \textbf{Wx} typically retains stronger discriminative ability, while \textbf{SUPP} can be less stable under harder settings (e.g., higher ratios), highlighting the necessity of combining excitation and inhibition (Table~\ref{tab:ablation_anomaly_mf1_auc}).
Together with the ET comparison, these results provide a comprehensive validation that the gains of \textbf{CDAT} come from the proposed dynamical design rather than ad-hoc architectural changes (Table~\ref{tab:ablation_anomaly_mf1_auc}).

\begin{table*}[t]
  \caption{Comprehensive ablation on anomaly detection across different anomaly ratios. We report MF1 and AUC (\%, mean $\pm$ std). ET is the baseline; \textbf{SUPP} and \textbf{Wx} are term-wise variants; \textbf{CDAT (Full)} corresponds to the complete model.}
  \label{tab:ablation_anomaly_mf1_auc}
  \centering
  \begin{small}
  \begin{tabular}{l c c c c c}
    \toprule
    \textbf{Dataset} & \textbf{Ratio} &
    \textbf{ET} & \textbf{SUPP} & \textbf{Wx} & \textbf{CDAT (Full)} \\
    \midrule
    \multicolumn{6}{c}{\textbf{MF1 (\%)}} \\
    \midrule
    amazon   & 0.01 & $89.42\pm5.80$ & $90.04\pm1.60$ & $89.42\pm5.80$ & \textbf{$91.2\pm0.9$} \\
    amazon   & 0.40 & $90.39\pm1.02$ & $91.39\pm1.59$ & $89.90\pm1.67$ & \textbf{$92.1\pm0.6$} \\
    amazon   & 0.70 & \textbf{$92.05\pm3.58$} & $85.82\pm10.81$ & $90.99\pm3.83$ & $91.9\pm0.1$ \\
    tfinance & 0.01 & $86.09\pm1.42$ & $59.53\pm6.49$ & $85.66\pm1.21$ & \textbf{$87.4\pm1.1$} \\
    tfinance & 0.40 & \textbf{$91.49\pm2.16$} & $86.42\pm2.50$ & \textbf{$91.49\pm2.16$} & $90.6\pm0.6$ \\
    yelp     & 0.01 & $62.74\pm1.93$ & $61.99\pm0.73$ & $61.94\pm0.88$ & \textbf{$63.0\pm1.4$} \\
    yelp     & 0.40 & $69.52\pm0.22$ & $68.74\pm2.05$ & $69.91\pm1.34$ & \textbf{$70.5\pm0.1$} \\
    yelp     & 0.70 & $70.30\pm3.69$ & $64.57\pm1.32$ & $69.21\pm0.96$ & \textbf{$71.5\pm0.2$} \\
    \midrule
    \multicolumn{6}{c}{\textbf{AUC (\%)}} \\
    \midrule
    amazon   & 0.01 & $94.51\pm1.19$ & \textbf{$95.43\pm1.78$} & $94.51\pm1.14$ & $94.5\pm1.1$ \\
    amazon   & 0.40 & $95.72\pm2.02$ & $95.10\pm1.23$ & $96.04\pm2.29$ & \textbf{$97.2\pm0.2$} \\
    amazon   & 0.70 & \textbf{$97.07\pm4.15$} & $92.90\pm5.03$ & \textbf{$97.07\pm4.15$} & $93.0\pm5.0$ \\
    tfinance & 0.01 & $92.25\pm1.11$ & $79.04\pm3.08$ & $92.68\pm1.72$ & \textbf{$95.9\pm0.4$} \\
    tfinance & 0.40 & $96.72\pm1.70$ & $93.65\pm1.02$ & $97.23\pm1.98$ & \textbf{$97.8\pm1.1$} \\
    yelp     & 0.01 & $72.86\pm1.26$ & $73.05\pm1.43$ & $73.30\pm1.59$ & \textbf{$74.3\pm0.3$} \\
    yelp     & 0.40 & $83.57\pm2.75$ & $81.70\pm2.08$ & $83.45\pm2.69$ & \textbf{$84.4\pm0.2$} \\
    yelp     & 0.70 & $81.65\pm0.96$ & $78.24\pm4.52$ & $82.22\pm0.32$ & \textbf{$91.9\pm0.3$} \\
    \bottomrule
  \end{tabular}
  \end{small}
\end{table*}

\section{Parameter Comparison}

Tables~\ref{tab:param_compare_cdat_et} and~\ref{tab:param_compare_graphcls} compare the parameter counts of our full model \textbf{CDAT} and the \textbf{ET} ablation.
On the anomaly detection setting (Tables~\ref{tab:param_compare_cdat_et}), CDAT and ET have identical parameters per Transformer block, while CDAT has a slightly larger total parameter count.
This is expected: our improvement is not introduced by modifying the internal architecture of each block, but by adding extra dynamical control terms (state-dependent modulation variables) outside the standard block parameterization.
Therefore, the additional parameters appear only in the global model budget, leaving the per-block parameter count unchanged.
On graph classification (Tables~\ref{tab:param_compare_graphcls}), we observe a similarly small gap between CDAT and ET, indicating that the proposed dynamics incurs only a negligible parameter overhead overall.

\begin{table}[ht]
\centering
\caption{Comparison between the number of parameters in our full model \textbf{CDAT} and the \textbf{ET} on anomaly detection task.}
\label{tab:param_compare_cdat_et}
\begin{tabular}{lcc}
\toprule
\textbf{Model} & \textbf{NParams} & \textbf{NParams (per block)} \\
\midrule
CDAT & 4.29M \hspace{4pt} $\downarrow$0.00\% & 0.59M \hspace{4pt} $\downarrow$0.00\% \\
ET   & 4.25M \hspace{4pt} $\downarrow$1.11\% & 0.59M \hspace{4pt} $\downarrow$0.00\% \\
\bottomrule
\end{tabular}
\end{table}

\begin{table}[t]
\centering
\caption{Parameter comparison between \textbf{CDAT} and \textbf{ET} on graph classification task.}
\label{tab:param_compare_graphcls}
\begin{tabular}{lc}
\toprule
\textbf{Model} & \textbf{NParams} \\
\midrule
CDAT & 1.060M \hspace{4pt} $\downarrow$0.00\% \\
ET   & 1.058M \hspace{4pt} $\downarrow$0.19\% \\
\bottomrule
\end{tabular}
\end{table}

\section{Additional Proofs}

\subsection{Proof of Theorem~\ref{prop:resgcn_equivalence}}
\label{app:gcn-proof}
\begin{proof}
Consider the continuous-time dynamics of a single token state $\mathbf{x}(t) \in \mathbb{R}^D$. In the context of Continuous Attractor Neural Networks (CANNs), the indices $i \in \{1, \cdots, D\}$ do not represent independent semantic features, but rather discrete coordinates on a latent topological manifold (\emph{e.g.}, a grid or a graph).

The mutual excitation component of the dynamics is given by $\tau \dot{\mathbf{x}} = W\mathbf{x}$. Applying a first-order Euler discretization with step size $\Delta t$ and $\alpha:=\Delta t/\tau$, we obtain the update rule for the $i$-th neuron:
\begin{equation}
    x_i^{(t+1)} = x_i^{(t)} + \alpha \sum_{j=1}^{D} W_{ij} x_j^{(t)}.
\end{equation}

In our formulation, the coupling matrix $W \in \mathbb{R}^{D \times D}$ is parameterized as a kernelized prior plus a learnable residual:
\begin{equation}
    W_{ij} = J_0 \exp\left(-\frac{d(i,j)^2}{2a^2}\right) + W_{ij}^{train},
\end{equation}
where $d(i,j)$ is the geodesic distance between neuron $i$ and neuron $j$ on the feature manifold.

Let $\mathcal{K}_{ij} = J_0 \exp(-d(i,j)^2 / 2a^2)$. The term $\sum_{j} \mathcal{K}_{ij} x_j$ represents a discrete approximation of the continuous convolution of the signal $x$ with a Gaussian kernel:
\begin{equation}
    (x * \mathcal{K})(i) \approx \sum_{j=1}^D \mathcal{K}_{ij} x_j.
\end{equation}
In Graph Signal Processing (GSP), this operation is precisely a \textbf{spatial graph convolution} (or heat diffusion) on the graph defined by the neurons, with $\mathcal{K}$ acting as the diffusion filter.

Furthermore, we can define an effective adjacency matrix $\tilde{A}$ for this latent graph where $\tilde{A}_{ij} = \alpha W_{ij}$. The update rule then becomes:
\begin{equation}
    \mathbf{x}^{(t+1)} = \mathbf{x}^{(t)} + \tilde{A} \mathbf{x}^{(t)} = (I + \tilde{A}) \mathbf{x}^{(t)}.
\end{equation}
This form is algebraically equivalent to a layer of a \textbf{Residual Graph Convolutional Network (ResGCN)} operating on the \textit{feature graph} (where nodes are feature dimensions), rather than the input data graph.

Thus, the mutual excitation term strictly enforces a topological prior: it diffuses activity among neighboring neurons in the latent space, smoothing the representation locally according to the manifold structure defined by $d(i,j)$, while the learnable term $W^{train}$ allows for non-local "shortcut" connections similar to long-range dependencies in modern GCNs. This proves that the update step is a residual convolution on the feature manifold.
\end{proof}

\subsection{Proof of Theorem~\ref{thm:dissipation_certificate}}
\label{app:dissipation-proof}
\begin{proof}
Let $\mathbf{x}(\cdot)$ be any $C^1$ solution of the dynamics and define $\mathbf{g}(t):=\mathbf{g}(\mathbf{x}(t))$.
Assume $E,\mathbf{g}\in C^1$ and $W=W^\top$.
Fix $t_0$ and define the trajectory-wise storage functional
\begin{equation}
\label{eq:appendix_Vt}
V(t)
:= E(\mathbf{g}(t))
+\int_{t_0}^{t}
\Bigl((1+\omega)\mathbf{x}(s)-W \mathbf{x}(s)\Bigr)^\top \dot{ \mathbf{g}}(s)\,ds,
\end{equation}
where $\dot{ \mathbf{g}}(s)=\frac{d}{ds}\mathbf{g}(\mathbf{x}(s))$.
Since $\mathbf{x}(\cdot)$ is $C^1$ and $\mathbf{g}\in C^1$, the integrand is continuous, hence the integral is well-defined.

By the chain rule,
\begin{equation}
\label{eq:chain_rule_E}
\frac{d}{dt}E(\mathbf{g}(t))=\nabla_\mathbf{g} E(\mathbf{g}(t))^\top  \dot{ \mathbf{g}}(t).
\end{equation}
Applying Leibniz' rule to the time integral in~\eqref{eq:appendix_Vt} yields
\begin{equation}
\label{eq:leibniz}
\frac{d}{dt}\int_{t_0}^{t}
\Bigl((1+\omega)\mathbf{x}(s)-W \mathbf{x}(s)\Bigr)^\top \dot{ \mathbf{g}}(s)\,ds
=
\Bigl((1+\omega)\mathbf{x}(t)-W \mathbf{x}(t)\Bigr)^\top \dot{ \mathbf{g}}(t).
\end{equation}
Combining~\eqref{eq:chain_rule_E} and~\eqref{eq:leibniz}, we obtain
\begin{equation}
\label{eq:Vdot_pre}
\dot V(t)
=
\Bigl(\nabla_\mathbf{g} E(\mathbf{g}(t))+(1+\omega)\mathbf{x}(t)-W \mathbf{x}(t)\Bigr)^\top \dot{ \mathbf{g}}(t).
\end{equation}

Rearranging the dynamics gives
\begin{equation}
\label{eq:dyn_rearrange}
\nabla_\mathbf{g} E(\mathbf{g}(t))+(1+\omega)\mathbf{x}(t)-W \mathbf{x}(t)
=
-\tau\,\dot{\mathbf{x}}(t).
\end{equation}
Substituting~\eqref{eq:dyn_rearrange} into~\eqref{eq:Vdot_pre} yields
\begin{equation}
\label{eq:Vdot_xg}
\dot V(t)=-\tau\,\dot{\mathbf{x}}(t)^\top \dot{\mathbf{g}}(t).
\end{equation}
Let  $M( \mathbf{x}):=\nabla_{ \mathbf{x}} \mathbf{g}( \mathbf{x})$. Therefore,
\begin{equation}
\label{eq:Vdot_final}
\dot V(t)
=
-\tau\,\dot{\mathbf{x}}(t)^\top M(\mathbf{x})\,\dot{\mathbf{x}}(t).
\end{equation}
If $M(\mathbf{x}(t))\succeq 0$ (e.g., by Lemma~\ref{lem:LN_Jacobian_PSD} for LayerNorm with $\gamma>0$),
then $\dot V(t)\le 0$, hence $V(t)$ is non-increasing.
\end{proof}

\subsection{Proof of Theorem~\ref{prop:movmf}}
\label{app:movmf-proof}

\begin{proof}
By the Boltzmann principle, the probability of a configuration indexed by
$l$ is proportional to the exponential of minus its energy,
\begin{equation}
\label{eq:boltzmann}
p_l \propto \exp(-\beta E_l).
\end{equation}

Now consider a mixture of von Mises--Fisher distributions on $S^{Y-1}$
with density
\begin{equation}
\label{eq:movmf}
p(x)
=
\sum_{B=1}^{N}
\pi_B\, C_Y(\kappa)\,
\exp\bigl(\kappa\,\mu_B^\top x\bigr),
\end{equation}
where all components share the same concentration parameter
$\kappa>0$, the mean directions satisfy $\|\mu_B\|=1$, and
$C_Y(\kappa)$ denotes the normalization constant.
Moreover, the mask $B\neq C$ in~\eqref{eq:att_energy} corresponds to a
query-dependent mixture that excludes the $B=C$ component, \emph{i.e.},
\[
\pi_{B|C}=\begin{cases}
\frac{1}{N-1}, & B\neq C,\\
0, & B=C.
\end{cases}
\]
To match the attention log-sum-exp over keys for each query, we identify
\[
\kappa = \beta,
\qquad
x = K_C,
\qquad
\mu_B = Q_B.
\]
Under this identification, the negative log-marginal likelihood of
observing $K_C$ becomes
\begin{align}
-\log p(K_C)
&=
-\log\!\left(
\sum_{B\neq C}
\frac{1}{N-1} C_Y(\beta)\exp\bigl(\beta\, Q_B^\top K_C\bigr)
\right) \notag\\
&=
-\log\!\left(
\sum_{B\neq C}\exp\bigl(\beta\, Q_B^\top K_C\bigr)
\right)
+\mathrm{const},
\label{eq:nll_single}
\end{align}
where the constant absorbs $\log C_Y(\beta)$ and $\log N$.

Comparing~\eqref{eq:nll_single} with the per-token contribution in
~\eqref{eq:att_energy}, we observe that the self-attention energy
associated with each key $K_C$ is proportional to the negative
log-likelihood under the Mo--vMF model. According to the Boltzmann
principle, we know
\begin{equation}
E^{K_C} \propto -\log p(K_C).
\end{equation}
Aggregating over all keys yields the global energy
\begin{equation}
\label{eq:global_energy}
E^{\mathrm{global}}
=
\sum_{C=1}^{N} E^{K_C}
\propto
-\frac{1}{\beta}
\sum_{C=1}^{N}
\log\!\left(
\sum_{B\neq C}
\exp\bigl(\beta\, Q_B^\top K_C\bigr)
\right),
\end{equation}
which matches the self-attention energy
\eqref{eq:attn-movmf} up to global constants.
\end{proof}

\section{Algorithm}

\begin{algorithm}[tb]
\caption{Training and inference pseudocode of \textsc{CDAT}}
\label{alg:cdat_al}
\begin{algorithmic}[1]

\STATE \textbf{HyperParameters}
\STATE rollout steps $T$; step size $\Delta t$; time constant $\tau$ (thus $\alpha=\Delta t/\tau$);
epochs $N_{\mathrm{epoch}}$; batch size $B$; inverse temperature $\beta$; learning rate $\eta$

\STATE \textbf{Parameters}
\STATE feature coupling $W\in\mathbb{R}^{D\times D}$ 
\STATE attention kernels $W^Q,W^K\in\mathbb{R}^{Y\times H\times D}$; Hopfield memory $\xi\in\mathbb{R}^{P\times D}$
\STATE decay rate $\omega \in \mathbb{R}_{>0}$; LayerNorm params  $(\gamma_{\mathrm{norm}},\delta_{\mathrm{norm}})$; balancing weights  $\lambda_{\mathrm{v}},\lambda_{\mathrm{h}}$ 
\STATE decoder parameters $\Psi$
\STATE dynamics parameters $\Theta$\\
\STATE[]{\noindent\rule{\linewidth}{0.4pt}}
   \STATE \textbf{Infer} \hfill // state dynamics rollout\\
   \STATE \textbf{Inputs:} corrupted/initial tokens $X_0\in\mathbb{R}^{N\times D}$
   \STATE Initialize $X \leftarrow X_0$, $\alpha \leftarrow \Delta t/\tau$
   \FOR{$t=0$ \textbf{to} $T-1$}
     \STATE $G \leftarrow \mathrm{LayerNorm}(X;\gamma_{\mathrm{norm}},\delta_{\mathrm{norm}})$
     \STATE $Q \leftarrow G(W^Q)^\top$, $K \leftarrow G(W^K)^\top$
     \STATE $E^{\mathrm{HN}} \leftarrow - \sum_{b=1}^{N}\sum_{\mu=1}^{P} \mathrm{ReLU}\!\left(\sum_{j=1}^{D} \xi_{\mu j}\, G_{b j}\right)$
     \STATE $E^{\mathrm{ATT-vMF}} \leftarrow -\frac{1}{\beta} \sum_{c=1}^{N}
      \log\!\left(\sum_{b\neq c} \exp\!\bigl(\beta\, Q_{b}^{\top}K_{c}\bigr)\right)$
     \STATE $E \leftarrow \lambda_{\mathrm{v}} E^{\mathrm{ATT-vMF}} + \lambda_{\mathrm{h}} E^{\mathrm{HN}}$
     \STATE $U \leftarrow \nabla_{G} E$
     \STATE $F_{\mathrm{exc}} \leftarrow WX$
     \STATE $F_{\mathrm{dec}} \leftarrow -(1+\omega)X$
     \STATE $X \leftarrow X + \alpha\bigl(F_{\mathrm{exc}} + F_{\mathrm{dec}} - U\bigr)$
   \ENDFOR
   \STATE \textbf{Output:} refined tokens $X_T \leftarrow X$\\
\STATE[]{\noindent\rule{\linewidth}{0.4pt}}
\STATE \textbf{Train}\\
   \STATE \textbf{Inputs:} training set $\mathcal{S}=\{(X_0^{(n)}, y^{(n)})\}$ \hfill // $X$ is the clean target tokens
   \FOR{$\text{epoch}=1$ \textbf{to} $N_{\mathrm{epoch}}$}
     \FOR{\textbf{each} mini-batch $\mathcal{B}\subset\mathcal{S}$}
       \STATE $\mathcal{L}_{\mathcal{B}} \leftarrow 0$
       \FOR{\textbf{each} $(X_0,y)\in \mathcal{B}$}
         \STATE $X_T \leftarrow \textsc{Infer}(X_0)$
         \STATE $\hat{y} \leftarrow \mathrm{Dec}_{\Psi}(X_T)$ \hfill // decode refined tokens
         \STATE $\mathcal{L}_{\mathcal{B}} \leftarrow \mathcal{L}_{\mathcal{B}} + \mathrm{MSE}(\hat{y}, y)$
       \ENDFOR
       \STATE $\mathcal{L}_{\mathcal{B}} \leftarrow \mathcal{L}_{\mathcal{B}}/|\mathcal{B}|$
       \STATE $\Theta \leftarrow \Theta - \eta\,\nabla_{\Theta}\mathcal{L}_{\mathcal{B}}$, $\Psi \leftarrow \Psi - \eta\,\nabla_{\Psi}\mathcal{L}_{\mathcal{B}}$
     \ENDFOR
   \ENDFOR
   \STATE \textbf{Return} trained parameters $(\Theta,\Psi)$

\end{algorithmic}
\end{algorithm}

%%%%%%%%%%%%%%%%%%%%%%%%%%%%%%%%%%%%%%%%%%%%%%%%%%%%%%%%%%%%%%%%%%%%%%%%%%%%%%%
%%%%%%%%%%%%%%%%%%%%%%%%%%%%%%%%%%%%%%%%%%%%%%%%%%%%%%%%%%%%%%%%%%%%%%%%%%%%%%%

\end{document}